\documentclass[twoside]{article}

\usepackage[accepted]{aistats2025}
\usepackage[round]{natbib}

\usepackage{booktabs}
\usepackage{makecell}
\usepackage{multirow}
\usepackage{color,xcolor}
\definecolor{mydarkblue}{rgb}{0,0.08,0.45}
\usepackage[colorlinks=true,
    linkcolor=mydarkblue,
    citecolor=mydarkblue,
    filecolor=mydarkblue,
    urlcolor=mydarkblue]{hyperref}
\usepackage[capitalise]{cleveref}
\crefname{assumption}{Assumption}{Assumptions}
\crefname{statement}{Statement}{Statements}
\usepackage{algorithm}
\usepackage[noend]{algorithmic}
\usepackage{comment}
\usepackage{amsmath,amsfonts,amssymb}
\usepackage{mathtools}
\usepackage{amsthm}
\usepackage{thmtools,thm-restate}

\newtheorem{proposition}{Proposition}

\newtheorem{assumption}{Assumption}
\newtheorem{lemma}{Lemma}

\newtheorem{example}{Example}

\newcommand{\cl}{\,:\,}
\usepackage{enumitem}

\newcommand\independent{\protect\mathpalette{\protect\independenT}{\perp}}
    \def\independenT#1#2{\mathrel{\rlap{$#1#2$}\mkern2mu{#1#2}}}
\newcommand\notindependent{\!\perp\!\!\!\!\not\perp\!}

\usepackage{minitoc}

\usepackage{tikz}
\usetikzlibrary{arrows,fit,positioning,shapes,bayesnet}
\usetikzlibrary{automata}
\newdimen\arrowsize
\pgfarrowsdeclare{arcsq}{arcsq}
{
	\arrowsize=0.2pt
	\advance\arrowsize by .5\pgflinewidth
	\pgfarrowsleftextend{-4\arrowsize-.5\pgflinewidth}
	\pgfarrowsrightextend{.5\pgflinewidth}
}
{
	\arrowsize=1.5pt
	\advance\arrowsize by .5\pgflinewidth
	\pgfsetdash{}{0pt} 
	\pgfsetroundjoin   
	\pgfsetroundcap    
	\pgfpathmoveto{\pgfpoint{0\arrowsize}{0\arrowsize}}
	\pgfpatharc{-90}{-140}{4\arrowsize}
	\pgfusepathqstroke
	\pgfpathmoveto{\pgfpointorigin}
	\pgfpatharc{90}{140}{4\arrowsize}
	\pgfusepathqstroke
}

\tikzset{
    block/.style={
        circle, draw, fill=white
        },
    myarrow/.style={
        single arrow,  
        draw, 
        single arrow head extend=2mm, minimum width=30pt,
        },    
    myar/.style={
        rounded corners=2pt, fill=black!20, 
        },
    mytri/.style={
        isosceles triangle, anchor=apex,
        isosceles triangle apex angle=90,
        minimum width=50pt
        },
    }

\begin{document}

\doparttoc 
\faketableofcontents 

%

%
\runningauthor{Ignavier Ng, Shaoan Xie, Xinshuai Dong, Peter Spirtes, Kun Zhang}

\twocolumn[

\aistatstitle{Causal Representation Learning from General Environments under Nonparametric Mixing
}

\aistatsauthor{ Ignavier Ng$^{1}$ \And Shaoan Xie$^{1}$ \And Xinshuai Dong$^{1}$ \And Peter Spirtes$^{1}$ \And Kun Zhang$^{1,2}$}

\aistatsaddress{\hspace{-6em}$^{1}$Carnegie Mellon University\And \hspace{-6em}$^{2}$Mohamed bin Zayed University of Artificial Intelligence } ]

\begin{abstract}
\vspace{-0.3em}
Causal representation learning aims to recover the latent causal variables and their causal relations, typically represented by directed acyclic graphs (DAGs), from low-level observations such as image pixels. A prevailing line of research exploits multiple environments, which assume how data distributions change, including single-node interventions, coupled interventions, or hard interventions, or parametric constraints on the mixing function or the latent causal model, such as linearity. Despite the novelty and elegance of the results, they are often violated in real problems. Accordingly, we formalize a set of desiderata for causal representation learning that applies to a broader class of environments, referred to as general environments. Interestingly, we show that one can fully recover the latent DAG and identify the latent variables up to minor indeterminacies under a nonparametric mixing function and nonlinear latent causal models, such as additive (Gaussian) noise models or heteroscedastic noise models, by properly leveraging sufficient change conditions on the causal mechanisms up to third-order derivatives. These represent, to our knowledge, the first results to fully recover the latent DAG from general environments under nonparametric mixing. Notably, our results match or improve upon many existing works, but require less restrictive assumptions about changing environments.
\end{abstract}

\section{Introduction}\label{sec:introduction}
Causal representation learning (CRL) aims to recover the latent causal variables and their causal structure from observations of variables that might be non-causal \citep{scholkopf2021causal}. It is of great importance in 
many scientific fields, as in real-life complex systems, available measurements are often low-level, indirect, and high-dimensional, e.g., image pixels, linguistic tokens, and  and gene expressions.

Despite growing interest, CRL remains a highly challenging task. Even in the case of independent latent variables—commonly referred to as nonlinear independent component analysis (ICA)—the problem is difficult~\citep{hyvarinen2023nonlinear}. Nonlinear ICA, despite being a strictly easier task due to the lack of relationships among latent variables, is notoriously unidentifiable without additional assumptions, as different latent representations can explain the observed data equally well, yet may not align with the underlying data generating process \citep{hyvarinen1999nonlinear}. Furthermore, CRL inherits the complexities of causal discovery \citep{spirtes2001causation,glymour2019review}, which is already challenging even when all causal variables are fully observed.

For the task of  CRL, a line of work focuses on purely observational data with parametric and graphical assumptions. For instance, various graphical conditions have been proposed with linearity \citep{silva2006learning,cai2019triad,xie2020generalized,xie2022identification,adams2021identification,huang2022latent,dong2023versatile} or discrete assumptions \citep{kivva2021learning}. Another prevailing line of research, which is the focus of this work, addresses the problem by making use of data from multiple distributions/environments, where the change of distribution is often assumed to arise from hard interventions \citep{vonkugelgen2023nonparametric,jiang2023learning,bing2024identifying}, single-node interventions \citep{ahuja2023interventional,squires2023linear,jiang2023learning,varici2023score,zhang2023identifiability}, coupled interventions \citep{vonkugelgen2023nonparametric,jin2023learning}, or counterfactual views \citep{vonkgelgen2021self,brehmer2022weakly}. Other related works are further discussed in \cref{app:related_works}.\looseness=-1

\begin{table*}[t]
    \label{tab:related_works}
    \centering
    \caption{Comparison of of several existing identifiability results of multi-environment CRL based on soft interventions. We only outline the key assumptions and results, while omitting some additional assumptions required by several studies. For a more comprehensive comparison, including works that rely on hard interventions, refer to Table~\hyperref[tab:related_works_complete]{2} in the supplementary materials.}
    \vskip 0.1in
    \scalebox{0.92}{
    \begin{tabular}{cccccccc}
        \toprule
        Work  & \makecell{Latent\\SEM} & \makecell{Mixing\\Function}  & \makecell{Desiderata\\Satisfied?} & \makecell{Identifiability of\\Latent Variables} & \makecell{Identifiability of\\ Latent DAG} \\
        \midrule
        \citet{squires2023linear}  & Linear & Linear & 1,2 & Up to ancestors  & Transitive closure\\
        \hline
        \citet{zhang2023identifiability}  & Nonlinear & Polynomial & 1,2 & Up to ancestors  & Transitive closure\\
        \hline
        \multirow{2}{*}{\citet{varici2024scorebased}}  & General & Linear & 1,2 & Up to ancestors  & Transitive closure\\
          & Nonlinear & Linear & 1,2 & Up to surrounding parents  & Full\\
        \hline
        \citet{ahuja2023interventional} & Bounded RV & Polynomial & 1,2 & Full & Full\\
        \hline
        \multirow{2}{*}{\citet{jin2023learning}}  & General & \textbf{General} & 1 & Up to surrounding parents & Full\\
          & Linear & Linear & \textbf{1,2,3} & Up to surrounding parents  & Full\\
        \hline
        \citet{varici2024linear}  & General & Linear & \textbf{1,2,3} & Up to ancestors & Transitive closure\\
        \hline
        \citet{zhang2024causal}  & General & \textbf{General} & \textbf{1,2,3} & Up to intimate neighbors & Moral graph\\
        \hline
        \multirow{2}{*}{\textbf{Ours}}  & ANM & \textbf{General} & \textbf{1,2,3} & Up to surrounding parents & Full\\
          & HNM & \textbf{General} & \textbf{1,2,3} & Up to surrounding parents & Full\\
        \bottomrule
    \end{tabular}
    }
\end{table*}

Despite the novelty and elegance of the results, many of these assumptions regarding how distribution changes are often violated in real problems. For instance, in genomics, it is commonplace to encounter soft interventions instead of hard interventions, e.g., in RNA interference \citep{dominguez2015editing}. Accordingly, in this work, we formalize a set of desiderata to establish more realistic assumptions about changing environments for CRL, and we refer to the setting as \emph{CRL from general environments}, detailed in \cref{section:desiderata}. Under such a general and realistic CRL setting, we propose, to our knowledge, the first identifiability result to fully recover the latent DAG and identify the latent causal variables up to minor indeterminacies, while allowing for nonlinear latent causal models and nonparametric mixing. In contrast, the existing results for general environments either require linearity on the mixing function \citep{jin2023learning,varici2024linear}, or can only identify the latent DAG up to its moral graph \citep{zhang2024causal}. A comparison is provided in Table~\ref{tab:related_works}.

Concretely, we show that, with general environments, nonparametric mixing, and nonlinear latent causal models—such as additive (Gaussian) noise models (\cref{theorem:identifiability_anm}) or heteroscedastic noise models (\cref{theorem:identifiability_hnm})—it is possible to fully recover the latent DAG and identify the underlying latent variables up to minor indeterminacies. Specifically, each latent variable can be identified up to a function of itself and its \emph{surrounding parents}~\citep{jin2023learning} in the true latent DAG. While \citet{zhang2024causal} exploit sufficient change conditions on the causal mechanisms up to second-order derivatives to identify the moral graph, we introduce a fundamentally different approach that properly leverages third-order derivatives to extract causal ordering information of the latent variables, enabling us to fully recover the latent DAG. This leads to a novel set of proof techniques, which we hope will inspire future research in this area. Finally, we validate our identifiability theory through simulation studies.
\section{Problem Setting}\label{sec:problem_setting}
\paragraph{Setup.} We describe the problem setting of CRL. We assume that the observed random variables $X=(X_1,\dots,X_d)$ and latent variables $Z=(Z_1,\dots,Z_n)$ follow the data generating process below:
\begin{equation}\label{eq:data_generating_process}
\begin{alignedat}{3}
\hspace{-1.2em}\text{(Mixing)}  & \;\,\,\, X&&=g(Z),\\
\hspace{-1.2em}\text{(Latent SEM)} & \;\,\,\, Z_i &&= f_i^{(u)}(\textrm{PA}(Z_i;\mathcal{G}_Z), \epsilon_i^{(u)}), \,\,i\in[n].
\end{alignedat}
\end{equation}
Here, the observed random variables $X$ are generated from the latent variables $Z$ via an unknown, nonparametric mixing function $g$ that is assumed to be a $\mathcal{C}^2$-diffeomorphism onto its image $\mathcal{X}\subseteq \mathbb{R}^d$. In each environment indexed by $u$, the latent variables $Z$ follow a structural equation model (SEM), characterized the same but unknown DAG $\mathcal{G}_Z$ consisting of nodes $\{Z_i\}_{i=1}^n$. Here, $\textrm{PA}(Z_i; \mathcal{G}_Z)$ denotes the parents of $Z_i$ in DAG $\mathcal{G}_Z$. 

Denote by $p_Z^{(u)}$ and $p_X^{(u)}$ the probability density functions of $Z$ and $X$, respectively. To simplify notation and when the context is clear, we omit subscripts in the density functions and use $X$ and $Z$ to represent both the random variables and their specific values. We assume that $p_Z^{(u)}$ is third-order differentiable and has full support on $\mathbb{R}^n$.

A summary of the data generating process is illustrated in \cref{fig:setting}. In this work, we aim to estimate the latent variables $Z$ and the latent DAG $\mathcal{G}_Z$ up to minor indeterminacies from samples of $X$ across a number of environments, i.e., $u\in [m]$. The distribution changes across environments may arise from heterogeneous data or nonstationary time series. Here, we focus on changes that can be represented by interventions---whether hard (perfect) or soft (imperfect) \citep{eberhardt2013direct,yang2018characterizing}.

\begin{figure}[t]
\centering
	{\hspace{0cm}\begin{tikzpicture}[scale=.75, line width=0.5pt, inner sep=0.2mm, shorten >=.1pt, shorten <=.1pt]
		\draw (0, 0) node(3) [circle]  {{\footnotesize\,$\dots$\,}};
		\draw (-2, 0) node(2)[circle, draw]  {{\footnotesize\,$Z_2$\,}};
		\draw (2, 0) node(4)[circle, draw]  {{\footnotesize\,$Z_n$\,}};
		\draw (-4, 0) node(1)[circle, draw]  {{\footnotesize\,$Z_1$\,}};
		\draw (-1, 1.4) node(5)  [circle, draw,fill=gray!30]{{\footnotesize\,\,\,$u$\,\,\,}};
		\draw[-arcsq] (2) -- (1); 
		\draw[-arcsq] (2) -- (3); 
		\draw[-arcsq] (3) -- (4);
		\draw[-arcsq] (5) -- (1);
        \draw[-arcsq] (5) -- (2);
        \draw[-arcsq] (5) -- (3);
        \draw[-arcsq] (5) -- (4);
\node[rectangle, dashed, inner sep=1.5mm, draw=black!100, opacity=1, fit = (1) (4), yshift = 0mm](aa) {};
\node[myarrow, anchor=tip, minimum height=32pt, rotate=-90] at ([yshift=-43pt]aa.south) (bb) {$g$};
		\draw (-0.3, -2.82) node(8)[circle]{{\footnotesize\,$\dots$\,}};
        \draw (-1.7, -2.82) node(7)[circle, draw, fill=gray!30]{{\footnotesize\,$X_2$\,}};
        \draw (-3.1, -2.82) node(6)[circle, draw, fill=gray!30]{{\footnotesize\,$X_1$\,}};
        \draw (1.1, -2.82) node(9)[circle, draw, fill=gray!30]{{\footnotesize\,$X_d$\,}};
\node[rectangle, dashed, inner sep=1.5mm, draw=black!100, opacity=1, fit = (6) (9), yshift = 0mm](aa) {};
		\end{tikzpicture}}
	\caption{The observed random variables $X$ are generated from the latent variables $Z$ via an unknown, nonparametric mixing function $g$. The causal mechanism for each latent variable $Z_i$ may vary across different environments specified by $u$. The gray shading of the nodes indicates that the variables are observable.}
	\label{fig:setting}
 \vspace{-0.4em}
\end{figure}
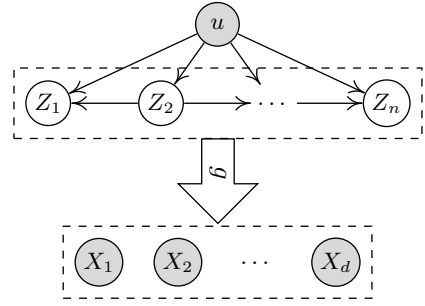

\paragraph{Observational equivalence.}
For identification, we consider a model $(\hat{g},p_{\hat{Z}},\mathcal{G}_{\hat{Z}})$ that follows the data generating process specified in \cref{eq:data_generating_process}, and generates a data distribution that matches the distribution of the observed random variables $X$ across the given environments, i.e.,
\begin{equation}\label{eq:observational_equivalence}
p^{(u)}_{\hat{X}}(x)=p^{(u)}_X(x), \quad \forall \, u\in [m], \, x\in\mathcal{X}^{(u)},
\end{equation}
where $\hat{Z}$ denote the estimated latent variables and $\hat{X}=\hat{g}(\hat{Z})$. In this case, the models $(\hat{g},p_{\hat{Z}},\mathcal{G}_{\hat{Z}})$ and $(g,p_Z,\mathcal{G}_Z)$ are said to be \emph{observationally equivalent}. In practice, observational equivalence can be attained by maximum likelihood estimation in the large sample limit, where various approaches including normalizing flows~\citep{danilo2015variational} can be applied.

\paragraph{Notations.}
We write $[n]\coloneqq \{1,\dots,n\}$, and denote by $\oplus$ the vector concatenation symbol. For a matrix $A$, we denote by $A_{i,:}$ and $A_{:,j}$ its $i$-th row and $j$-th column, respectively. For a vector $Z=(Z_1,\dots,Z_n)$, we write $Z_{[k]}\coloneqq(Z_1,\dots,Z_k)$, and similarly for a matrix. With a slight abuse of notation, we occasionally treat a vector as a set. For example, we may write $Z_j\in Z$ if $Z_j\in \{Z_i\}_{i=1}^n$. Similar to the cardinality of a set, we denote by $|Z|$ the number of entries in vector $Z$. Moreover, we denote by $\mathcal{S}(\mathcal{G}_Z)$ the set of sink nodes in DAG $\mathcal{G}_Z$. For two DAGs $\mathcal{G}_Z$ and $\mathcal{G}_{\hat{Z}_{\pi}}$ where $\pi$ is a permutation, we say that they are identical when $Z_i\rightarrow Z_j$ in $\mathcal{G}_Z$ if and only if $\hat{Z}_{\pi(i)}\rightarrow \hat{Z}_{\pi(j)}$ in $\mathcal{G}_{\hat{Z}}$. Following \citet{zhang2024causal}, we define the intimate neighbors of $Z_i$ as
\begin{flalign*}
\Psi(Z_i;\mathcal{M}_Z)\coloneqq &\{ Z_j \mid  Z_j,j\neq i, \textrm{ is adjacent to }Z_i\textrm{ and}\\
&\quad \textrm{all other neighbors of }Z_i\textrm{ in }\mathcal{M}_Z \}.
\end{flalign*}
\section{Desiderata for CRL from General Environments}
\label{section:desiderata}
Nonlinear ICA may be viewed as a special case of CRL in which the true latent DAG is an empty graph. However, even in this case, the latent variables has shown to be unidentifiable without any assumptions~\citep{hyvarinen1999nonlinear}. This challenge extends to CRL, a problem that is inherently more complex. 
As discussed in \cref{sec:introduction,app:related_works}, a prevailing line of research relies on distributional changes in the latent variables across different environments, including hard interventions, single-node interventions, or coupled interventions, some of which may be overly restrictive in practice.

In this work, we formalize a set of desiderata for CRL which applies to a broader class of environments that may be more realistic in real-world scenarios, referred to as general environments:
\begin{itemize}
\item \textbf{Desideratum~1:} The setting does not require hard interventions. The interventions can be either soft or hard.
\item \textbf{Desideratum~2:} The setting does not assume any prior knowledge of intervention targets or their coupling pattern across environments.
\item \textbf{Desideratum~3:} The interventions can be either single-node or multi-node across environments.
\end{itemize}
Note that the term ``general environments'' has been used by \citet{jin2023learning}; here, we aim to provide a precise formulation of the desiderata, and use that term to refer to environments that satisfy the above desiderata. For Desideratum~1, hard interventions eliminate the dependency between the targeted variables and their parents, while soft interventions modify the causal mechanism of targeted variables without fully eliminating the influences of their parents, which thus are often considered more realistic \citep{yang2018characterizing,jaber2020causal}. For instance, in genomics, it is commonplace to encounter soft interventions, e.g., in  CRISPR-mediated gene activation or RNA interference \citep{dominguez2015editing}. For Desideratum~3, in real-world environments, the causal mechanisms of multiple latent variables may simultaneously change, resulting in multi-node interventions that extend beyond the scope of single-node interventions.\looseness=-1

For Desideratum~2, since the latent variables are unknown by definition, it is in practice often infeasible to know the intervention targets and their coupling pattern (i.e., which environments share the same intervention targets), such as biology~\citep{squires2020permutation,tejada2023causal} and robotics~\citep{lee2023scale}. Furthermore, we show in \cref{proposition:coupled_single_node_interventions} that knowing the coupling pattern of the intervention targets is equivalent to knowing the intervention targets (up to variable permutation) for single-node interventions, with a proof given in \cref{app:proof_proposition_coupled_single_node_interventions}.
\begin{restatable}[Coupled single-node interventions]{proposition}{PropositionCoupledSingleNodeInterventions}\label{proposition:coupled_single_node_interventions}
Consider a set of single-node interventions on the latent variables $Z$. The knowledge of which interventions share the same targets is equivalent to the knowledge of intervention targets  up to variable permutation.
\end{restatable}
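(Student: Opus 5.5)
The plan is to formalize both kinds of knowledge as set-theoretic objects and then show that each determines the other; the equivalence then reduces to a bijection argument. Let the environments (interventions) be indexed by $u\in[m]$. Single-node interventions mean each $u$ has a unique target $t(u)\in[n]$, namely the index of the latent variable $Z_{t(u)}$ whose mechanism changes.

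\begin{itemize}
\item \emph{Knowledge of intervention targets up to variable permutation} means knowing the map $\pi\circ t:[m]\to[n]$ for some unknown permutation $\pi$ of $[n]$. Equivalently, we know $t$ only up to relabeling of its codomain.
\item \emph{Knowledge of the coupling pattern} means knowing the equivalence relation $u\sim u' \iff t(u)=t(u')$ on $[m]$. Equivalently, we know the partition of $[m]$ into the fibers $t^{-1}(i)$, $i\in[n]$, with empty fibers discarded.
\end{itemize}

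\textbf{Targets up to permutation $\Rightarrow$ coupling.} Given $\tilde t=\pi\circ t$ for some permutation $\pi$, injectivity of $\pi$ gives
\[
\tilde t(u)=\tilde t(u') \iff t(u)=t(u').
\]
Hence $\tilde t$ induces exactly the coupling relation, and the coupling pattern can be read off directly.

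\textbf{Coupling $\Rightarrow$ targets up to permutation.} Suppose we are given the partition $\{B_1,\dots,B_k\}$ of $[m]$ into coupling classes, with $k\le n$. Since each class corresponds to a distinct target, there is an injection $\sigma:[k]\to[n]$ with $t(u)=\sigma(j)$ for all $u\in B_j$. Define the labeling $\tilde t(u)=j$ for $u\in B_j$. Extend $\sigma$ to a permutation $\pi^{-1}$ of $[n]$ by mapping $\{k+1,\dots,n\}$ bijectively onto $[n]\setminus\sigma([k])$. Then $t=\pi^{-1}\circ\tilde t$, so $\tilde t=\pi\circ t$. This shows $\tilde t$ agrees with the true targets up to a permutation of variable indices. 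Since the latent variables $Z$ are themselves only identifiable up to relabeling, this is precisely the information contained in knowing the targets up to variable permutation.

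The only delicate point is the case where not every latent variable is intervened on ($k<n$). This is handled by the arbitrary extension of $\sigma$ above: the unintervened variables receive arbitrary remaining labels, which is harmless because labels are only meaningful up to permutation. One should also note that, with single-node interventions, distinct coupling classes necessarily have distinct targets. This is exactly what makes $\sigma$ injective, and it is where the single-node hypothesis is used. For multi-node interventions, the coupling pattern would not determine which variables lie in each target set, and the equivalence would fail.
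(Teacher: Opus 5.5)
Your proof is correct and follows essentially the same route as the paper's: label the coupling classes arbitrarily and note that the true target assignment differs from this labeling only by a permutation of the latent indices. The paper simply asserts that some unknown $\pi^*$ gives the correct assignment and implicitly assumes all $n$ variables are targeted. Your version makes that permutation explicit, handles $k<n$ by extending $\sigma$, and also writes out the easy converse via injectivity of $\pi$.

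One small quibble with your closing remark. Injectivity of $\sigma$ comes from the coupling classes being defined as fibers of $t$, not from the single-node hypothesis. The single-node assumption is used because $t$ takes values in $[n]$, so relabeling the classes amounts to relabeling the variables.
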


In Tables~\ref{tab:related_works} and~\hyperref[tab:related_works_complete]{2} in the supplementary materials, we specify which desiderata above each existing result satisfies. Surprisingly, many existing results fail to satisfy at least one of the desiderata, indicating that current approaches may impose limitations or restrictive assumptions that hinder their applicability in more general, realistic environments.

\section{CRL with Latent Additive Noise Models}\label{sec:anm}
In this section, we present our identifiability theory for general environments, which leverages specific properties of additive noise models (ANMs). Before discussing it, we first review the key idea of~\citet{zhang2024causal}. In particular, the authors utilize the following property~\citep{lin1997factorizing} regarding the conditional independence structure of the latent variables:
\begin{equation}
\label{eq:cross_de}
Z_i \independent Z_j \mid Z_{[n] \backslash \{i,j\}} \iff \frac{\partial^2 \log p^{(u)}(Z)}{\partial{Z_i} \partial{Z_j}} = 0.
\end{equation}
Denote by $\mathcal{M}_Z$ the Markov network over latent variables $Z$, where $\mathcal{M}_Z$ consists of nodes $\{Z_i\}_{i=1}^n$ and undirected edges $\{Z_i,Z_j\}\in\mathcal{E}(\mathcal{M}_Z)$ if and only if $Z_i \notindependent Z_j \mid Z_{[n] \backslash \{i,j\}}$. \cref{eq:cross_de} indicates that $Z_i$ and $Z_j$ are not adjacent in the Markov network if and only if the above equation holds. \citet{zhang2024causal} then utilize sufficient changes of the latent distribution, involving second-order derivatives above, to recover the Markov network (or moral graph under the faithfulness assumption~\citep{spirtes2001causation}) over $Z$ in a nonparametric setting. However, since this approach essentially exploits a certain type of conditional independence information (that is invariant across all environments) via second-order derivatives that are symmetric, it generally cannot infer all causal directions, and can at most recover a Markov equivalence class over the latent variables, similar to constraint-based causal discovery approaches such as PC \citep{spirtes1991pc}.

In this work, we consider two classes of latent causal models and show that, interestingly, the third-order derivatives can reveal information about causal directions. This allows us to fully recover the latent DAG, by properly leveraging sufficient change conditions on the distributions up to third-order derivatives. Specifically, we focus on latent ANMs~\citep{hoyer2009nonlinear,rolland2022score} of the form:
\begin{equation}\label{eq:anm}
Z_i = f_i^{(u)}(\textrm{PA}(Z_i;\mathcal{G}_Z))+\epsilon_i^{(u)},\quad i\in[n],
\end{equation}
where the noise term $\epsilon_i$ follows Gaussian distribution and function $f_i^{(u)}$ is third-order differentiable. In the context of causal discovery from observed variables, it has been shown that the above causal model is identifiable under certain conditions \citep{hoyer2009nonlinear,rolland2022score}. We further discuss the use of heteroscedastic noise models in \cref{sec:hnm}.

In \cref{sec:third_order_derivatives_anm}, we discuss the property of sink nodes. We then provide our identifiability result in \cref{sec:identifiability_anm} and the estimation method in \cref{sec:estimation_method_anm}.

\subsection{Third-Order Derivatives of Sink Nodes}\label{sec:third_order_derivatives_anm}
We show how sink nodes provide information related to the third-order derivatives of the distribution of latent variables. Specifically, each sink node has the following property. The proof is given in \cref{app:proof_lemmas_zero_derivative_anm}.
\begin{restatable}{lemma}{LemmaZeroDerivativeANM}\label{lemma:zero_derivative_anm}
Consider the data generating process in \cref{eq:anm} and let $Z_i$ be a sink node in DAG $\mathcal{G}_Z$. Then,
\[
\frac{\partial \log p^{(u)}(Z)}{\partial Z_i^2 \partial Z_j}=0 \quad\textrm{for}\quad j\in[d].
\]
\end{restatable}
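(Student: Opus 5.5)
The plan is to write the latent log-density explicitly through the Markov factorization of the ANM in \cref{eq:anm} and differentiate term by term. Since the noise terms are Gaussian, say $\epsilon_k^{(u)}\sim\mathcal{N}(0,\sigma_k^{(u)2})$, and the map from noise to latents is triangular with unit Jacobian determinant, the factorization gives
\begin{equation*}
\log p^{(u)}(Z)=\sum_{k=1}^n \log p_{\epsilon_k}^{(u)}\bigl(Z_k-f_k^{(u)}(\textrm{PA}(Z_k;\mathcal{G}_Z))\bigr)
=-\sum_{k=1}^n \frac{\bigl(Z_k-f_k^{(u)}(\textrm{PA}(Z_k;\mathcal{G}_Z))\bigr)^2}{2\sigma_k^{(u)2}}+C^{(u)},
\end{equation*}
where $C^{(u)}$ does not depend on $Z$. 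The statement should be read as the third-order derivative $\partial^3\log p^{(u)}(Z)/\partial Z_i^2\partial Z_j$, with $j$ ranging over the latent indices $[n]$.

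First I locate where $Z_i$ appears. Because $Z_i$ is a sink node, it is not a parent of any $Z_k$. So $Z_i$ appears only in the $k=i$ summand, through the residual $Z_i-f_i^{(u)}(\textrm{PA}(Z_i))$. Every other summand has zero partial derivative in $Z_i$, and therefore contributes nothing to any derivative that includes $\partial/\partial Z_i$.

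Next I differentiate the $i$-th summand twice in $Z_i$. Since $f_i^{(u)}$ does not depend on $Z_i$ (no self-loops in a DAG), we have
\[
\frac{\partial}{\partial Z_i}\Bigl(-\frac{(Z_i-f_i^{(u)})^2}{2\sigma_i^{(u)2}}\Bigr)=-\frac{Z_i-f_i^{(u)}}{\sigma_i^{(u)2}},
\qquad
\frac{\partial^2}{\partial Z_i^2}\Bigl(-\frac{(Z_i-f_i^{(u)})^2}{2\sigma_i^{(u)2}}\Bigr)=-\frac{1}{\sigma_i^{(u)2}}.
\]
The second derivative is a constant in $Z$, so its partial derivative with respect to any $Z_j$ vanishes, for $j=i$ and for $j\neq i$ alike. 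This gives the claim. The third-order differentiability assumed for $p_Z^{(u)}$ and $f_i^{(u)}$ justifies the differentiations, and the order of differentiation can be interchanged.

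I do not expect a real obstacle. The one point needing care is that the argument uses the Gaussianity of $\epsilon_i$, and specifically the fact that it is homoscedastic. This is what makes $\partial^2\log p_{\epsilon_i}/\partial\epsilon_i^2$ constant. For a general noise density, the second derivative would be $(\log p_{\epsilon_i})''(Z_i-f_i)$, which still depends on $Z_i$ and $\textrm{PA}(Z_i)$, and the lemma would fail. I would therefore state the Gaussian assumption explicitly at that step.
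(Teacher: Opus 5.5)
Your proposal is correct and matches the paper's proof: Markov factorization, the observation that a sink node $Z_i$ enters only its own Gaussian factor, and the computation $\partial^2\log p^{(u)}(Z)/\partial Z_i^2=-1/(\sigma_i^{(u)})^2$, a constant whose derivative in every $Z_j$ vanishes. Your reading of the statement as a third-order derivative with $j\in[n]$ is the intended one, since the missing exponent and the range $[d]$ are typos in the statement.
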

As discussed in \cref{sec:identifiability_anm}, this property provides information that can be leveraged to identify the sink node from the latent Markov network. Once the sink node is identified, the Markov network property in \cref{eq:cross_de} can be used to infer its parents. Specifically, the parents of a sink node $Z_i$ are precisely the neighboring nodes of $Z_i$ in the Markov network over $Z$.

It is worth noting that the above lemma closely resembles the work of \citet{rolland2022score}, who developed a method for estimating causal structures among observed variables using score matching. However, they focus on causal discovery in settings where all variables are observed, and utilize the information that the \emph{variance} of the second-order derivatives is zero. This limits its direct applicability to CRL. In contrast, as we will demonstrate in the next section, leveraging third-order derivatives as in \cref{lemma:zero_derivative_anm} allows us to establish sufficient change conditions on the latent distribution, enabling identification of the latent DAG.

Moreover, in \cref{lemma:zero_derivative_anm}, we leverage the fact that sink node is a sufficient condition for the corresponding third-order partial derivative to be equal to zero. To establish it as a necessary condition, one could adopt similar assumptions to those in \citet{rolland2022score}, such as requiring $f_i^{(u)}$ to be nonlinear in every component. It is worth noting that our results do not rely on this necessity, as long as \cref{assumption:sufficient_changes_markov_network,assumption:sufficient_changes_intimate_neighbors_anm} can be satisfied.

\subsection{Identifiability Theory}\label{sec:identifiability_anm}
We introduce our identifiability result for general environments and nonparametric mixing. The key idea is to properly leverage sufficient change conditions on the causal mechanisms up to third-order derivatives, building upon \cref{lemma:zero_derivative_anm}.

We first provide the definition of \emph{surrounding parents}~\citep{jin2023learning,varici2024scorebased} that is crucial for the identifiability result of the estimated latent variables:\footnote{
The published version of this paper used a more complex definition of ``intimate parents''. This definition can be simplified (without affecting correctness) and becomes equivalent to the notion of surrounding parents~\citep{jin2023learning,varici2024scorebased}, which is adopted in this updated version. The earlier claim that our result is stronger than \citet{jin2023learning} is also removed.}
\begin{flalign*}
&\operatorname{sur}(Z_i;\mathcal{G}_Z)\coloneqq  \{Z_j\in\textrm{PA}(Z_i;\mathcal{G}_Z) \mid  Z_j\textrm{ is a parent of}\\
&\qquad\qquad \textrm{every child of }Z_i\textrm{ in } \mathcal{G}_Z\}.
\end{flalign*}
Specifically, \citet{zhang2024causal} showed that the underlying Markov network (or moral graph under faithfulness assumption) $\mathcal{M}_Z$ can be recovered, and that the latent variables can be identified up to permutation and mixing with intimate neighbors, i.e., $Z_i\cup \Psi(Z_i;\mathcal{M}_Z)$. In contrast, with the ANM described in \cref{eq:anm}, we show that, one can fully recover the latent DAG $\mathcal{G}_Z$ and identify the latent variables up to permutation and mixing with surrounding parents, i.e., $Z_i\cup \operatorname{sur}(Z_i;\mathcal{G}_Z)$. Importantly, we have $\operatorname{sur}(Z_i;\mathcal{G}_Z)\subseteq \Psi(Z_i;\mathcal{M}_Z)$, indicating that our identifiability result significantly refines that of \citet{zhang2024causal}. Whereas their indeterminacy of latent variables involves specific types of parents, children, and spouses, our result restricts the ambiguity to only a specific type of parents.

We now present the assumptions and the identifiability result, with a detailed explanation of both provided later in this section.
\begin{assumption}[Sufficient changes for Markov network]\label{assumption:sufficient_changes_markov_network}
Let $Z'$ denote any subset of latent variables $Z$ that includes all of their ancestors, and $n'\coloneqq|Z'|$. For each value of $Z'$, there exist $2n'+|\mathcal{M}_{Z'}|+1$ values of $u$, i.e., $u_j$ with $j=0,\dots,2n'+|\mathcal{M}_{Z'}|$, such that the vectors $\tau(Z',u_j)-\tau(Z',u_0)$ with $j=1,\dots,2n'+|\mathcal{M}_{Z'}|$ are linearly independent, where vector $\tau(Z', u)$ is defined as:
\begin{flalign*}
& \tau(Z', u)=\left(\frac{\partial \log p^{(u)}(Z')}{\partial  Z_i}\right)_{Z_i\in Z'}\\
&\qquad\qquad\qquad\oplus\left(\frac{\partial^2 \log p^{(u)}(Z')}{\partial  Z_i^2}\right)_{Z_i\in Z'}\\
&\qquad\qquad\qquad \oplus \left(\frac{\partial^2 \log p^{(u)}(Z')}{\partial  Z_i \partial  Z_j}\right)_{i<j\cl\{Z_i,Z_j\}\in\mathcal{E}(\mathcal{M}_{Z'})}.
\end{flalign*}
\end{assumption}
\begin{assumption}[Sufficient changes for sink nodes]\label{assumption:sufficient_changes_intimate_neighbors_anm}
Let $Z'$ denote any subset of latent variables $Z$ that includes all of their ancestors. For each value of $Z'$, there exist $L+1$ values of $u$, i.e., $u_j$ with $j=0,\dots,L$, such that the vectors $w(Z',u_j)-w(Z',u_0)$ with $j=1,\dots,L$ are linearly independent, where $L\coloneqq |w(Z', u)|$ and vector $w(Z', u)$ is defined as:
\begin{flalign*}
& w(Z', u)=\left(\frac{\partial \log p^{(u)}(Z')}{\partial  Z_i}\right)_{Z_i\in Z'}\\
& \qquad\qquad\oplus \left(\frac{\partial^2 \log p^{(u)}(Z')}{\partial  Z_i^2}\right)_{Z_i\in Z'}\\
& \qquad\qquad\oplus \left(\frac{\partial^2 \log p^{(u)}(Z')}{\partial  Z_i \partial  Z_j}\right)_{i<j\cl\{Z_i,Z_j\}\in\mathcal{E}(\mathcal{M}_{Z'})}\\
& \qquad\qquad\oplus \left(\frac{\partial^3 \log p^{(u)}(Z')}{\partial  Z_i^3}\right)_{Z_i\in Z'\setminus \mathcal{S}(\mathcal{G}_{Z'})}\\
& \qquad\qquad\oplus \left(\frac{\partial^3 \log p^{(u)}(Z')}{\partial  Z_i^2 \partial  Z_j}\right)\mathstrut_{\begin{subarray}{l} \vspace{0.2em}\\i<j\cl\{Z_i,Z_j\}\in\mathcal{E}(\mathcal{M}_{Z'}),\\ Z_i\not\in\mathcal{S}(\mathcal{G}_{Z'}) \end{subarray}}\\
& \qquad\qquad \oplus \left(\frac{\partial^3 \log p^{(u)}(Z')}{\partial  Z_i \partial  Z_j\partial Z_k}\right)\mathstrut_{\begin{subarray}{l} \vspace{0.2em}\\i<j<k\cl \{Z_i,Z_j\},\\\{Z_j,Z_k\},\{Z_i,Z_k\}\in\mathcal{E}(\mathcal{M}_{Z'})\end{subarray}}.
\end{flalign*}
\end{assumption}

\begin{restatable}[Identifiability with latent ANMs]{theorem}{ThmIdentifiabilityANM}\label{theorem:identifiability_anm}
Consider the data generating process in \cref{eq:data_generating_process,eq:anm}. Suppose that \cref{assumption:sufficient_changes_markov_network,assumption:sufficient_changes_intimate_neighbors_anm}, as well as the faithfulness assumption, hold. Let $\mathcal{G}_{\hat{Z}}$ and $\hat{Z}$ be the output of \cref{alg:crl}. Then, there exists a permutation $\pi$ of the estimated latent variables $\hat{Z}$, denoted as $\hat{Z}_{\pi}$, such that:
\begin{enumerate}
\item (Identifiability of $\mathcal{G}_Z$) $\mathcal{G}_{\hat{Z}_\pi}$ and $\mathcal{G}_Z$ are identical.
\item (Identifiability of $Z$) $\hat{Z}_{\pi(i)}$ is solely a function of a subset of $\{Z_i\}\cup\operatorname{sur}(Z_i;\mathcal{G}_Z)$.
\end{enumerate}
\end{restatable}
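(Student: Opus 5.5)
The argument is an induction that peels off one sink node at a time. I assume \cref{alg:crl} works this way: it first estimates the Markov network, then repeatedly removes an estimated sink node together with its estimated parents. Everything rests on the change-of-variables relation between the true and estimated latents.

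\textbf{Change of variables.} Observational equivalence \cref{eq:observational_equivalence} gives, for each $u$,
\[
\log p^{(u)}_{\hat Z}(\hat Z)=\log p^{(u)}_Z(Z)+\log|\det J_h|,
\]
where $h\coloneqq g^{-1}\circ\hat g$ is a $\mathcal{C}^2$ diffeomorphism, $Z=h(\hat Z)$, and the Jacobian term does not depend on $u$.

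\textbf{Step 1: recover the Markov network.} I would follow \citet{zhang2024causal}. Differentiate the relation twice with respect to $\hat Z_k,\hat Z_l$ for a pair that is non-adjacent in $\mathcal{M}_{\hat Z}$; the left side is then zero by \cref{eq:cross_de}. Subtract the $u_0$ equation to remove the Jacobian term. What remains is a linear combination of the entries of $\tau(Z,u_j)-\tau(Z,u_0)$, with coefficients built from $\partial Z_i/\partial\hat Z_k$ and second derivatives of $h$. By \cref{assumption:sufficient_changes_markov_network} (with $Z'=Z$), linear independence forces every coefficient to vanish. Products of the form $\frac{\partial Z_i}{\partial\hat Z_k}\frac{\partial Z_i}{\partial\hat Z_l}=0$ and their cross-term analogues then give a permutation $\pi$. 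Under this $\pi$, $\mathcal{M}_{\hat Z_\pi}=\mathcal{M}_Z$ (using faithfulness to identify it with the moral graph), and the Jacobian has the intimate-neighbour sparsity pattern.

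\textbf{Step 2: identify a sink node.} Take $\hat Z_k$ with
\[
\partial^3\log p_{\hat Z}/\partial\hat Z_k^2\partial\hat Z_l=0 \quad\text{for all } l,
\]
which is possible by \cref{lemma:zero_derivative_anm} applied to the estimated model, itself an ANM. Differentiate the change-of-variables identity a third time and take differences across $u_j$. This expresses zero as a linear combination of the entries of $w(Z,u_j)-w(Z,u_0)$. Third-derivative terms of true sinks are absent from $w$ precisely because they vanish. \cref{assumption:sufficient_changes_intimate_neighbors_anm} then forces the coefficients of the non-sink third-derivative entries, $\partial^3/\partial Z_i^3$ and $\partial^3/\partial Z_i^2\partial Z_j$, to be zero. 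Those coefficients are of the form $(\partial Z_i/\partial\hat Z_k)^2\,\partial Z_j/\partial\hat Z_l$. I then argue that a non-sink $Z_i$ cannot depend on $\hat Z_k$, so $\pi(k)$ must be a true sink. Its parents are then its Markov-network neighbours, so its incoming edges are recovered. Moreover, $Z_{\pi(k)}$'s dependence is confined: it cannot mix with its children, because it has none, and it can mix only with parents that parent every child of a given variable. This last point is where the surrounding-parent set $\operatorname{sur}$ appears.

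\textbf{Step 3: recurse.} Remove the sink and recurse on $Z'=Z\setminus\{Z_{\pi(k)}\}$. This set is ancestrally closed, so its marginal is again an ANM over the induced subgraph, and both assumptions are stated for every such $Z'$. Marginalizing the estimated sink likewise preserves the change-of-variables structure on the remaining coordinates, because the Jacobian is block-triangular with respect to the removed sink. Induction then recovers the full DAG and the stated mixing indeterminacy.

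\textbf{Main obstacle.} I expect Step 2 to be hardest. The third-order expansion produces many terms: products of first, second and third derivatives of $h$ multiplied by the entries of $w$. Showing that all coefficients vanish isolates the right monomials, and deducing sink-ness from that requires careful use of the sparsity pattern from Step 1. A second delicate point is justifying that marginalizing out the estimated sink keeps the remaining $\hat Z$ a valid diffeomorphic reparametrization of $Z'$. For both, I would first try to exploit that the coefficient of $\partial^3/\partial Z_i^2\partial Z_j$ is $(\partial Z_i/\partial\hat Z_k)^2\,\partial Z_j/\partial\hat Z_l$, which must vanish for every $l$, and then pick $l$ with $\partial Z_j/\partial\hat Z_l\neq 0$, which invertibility guarantees.
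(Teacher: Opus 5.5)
Your skeleton matches the paper's: change of variables, Zhang et al.'s Markov-network step, third-order derivatives at an estimated sink forcing non-sinks off it, and peeling sinks via the ancestrally-closed form of the assumptions. However, Step~3 has a genuine gap. Re-running Step~1 on $Z'$ requires the estimated Markov network on the \emph{remaining block} to be sparsest, and marginalizing one fitted model does not provide that. Take $Z_1\to Z_2\leftarrow Z_3$ with Gaussian roots and set $\hat Z_1=Z_1+Z_3$, $\hat Z_2=Z_2$, $\hat Z_3=Z_3$. This is an observationally equivalent ANM in every environment ($\hat Z_1=\hat Z_3+\epsilon_1^{(u)}$, $\hat Z_2=f_2^{(u)}(\hat Z_1-\hat Z_3,\hat Z_3)+\epsilon_2^{(u)}$). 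Its Markov network is the triangle, hence already minimal, and its unique sink $\hat Z_2$ is correct. Yet after removing $\hat Z_2$, the pair $\hat Z_1,\hat Z_3$ is dependent while $Z_1,Z_3$ are independent. Marginalize-and-recurse therefore outputs the spurious edge $\hat Z_3\to\hat Z_1$, and $\hat Z_1$ depends on $Z_3\notin\operatorname{sur}(Z_1;\mathcal{G}_Z)=\emptyset$, so both claims of the theorem fail. This is why \cref{alg:crl} re-fits the whole model at each iteration. It minimizes the edges of $\mathcal{M}_{\hat Z_{[n+1-t]}}$ while freezing the local structure of already-found sinks and not enlarging earlier prefix Markov networks. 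The proof then views the final output as a valid output of every earlier iteration, so the sparsity-based results apply to each prefix $\hat Z_{[n-t]}\to Z_{\alpha[n-t]}$. Relatedly, your block-triangularity claim also needs that the \emph{other} true sinks do not depend on $\hat Z_k$. The third-order argument covers only non-sinks; for sinks you must use the intimate-neighbour pattern together with the fact that two sinks are non-adjacent in the moral graph.

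Second, ``induction then recovers the stated mixing indeterminacy'' skips the argument for Statement~2. One application of the Markov-network step confines $\hat Z_{\pi(i)}$ only to $\{Z_i\}\cup\Psi(Z_i;\mathcal{M}_Z)$, which can contain children and spouses (above, $\Psi(Z_1;\mathcal{M}_Z)=\{Z_2,Z_3\}$). Your remark about parents ``that parent every child'' is attached to the sink, where $\operatorname{sur}$ is vacuously all parents, and gives no mechanism for non-sinks. The missing idea is as follows. Let $\alpha$ be the peeling order, which is a causal order. Each stage $k\ge i$ contributes the constraint $\{Z_{\alpha(i)}\}\cup\Psi(Z_{\alpha(i)};\mathcal{M}_{Z_{\alpha[k]}})$, and these accumulate to $\bigcap_{k=i}^{n}\Psi(Z_{\alpha(i)};\mathcal{M}_{Z_{\alpha[k]}})$. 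A separate graph lemma (\cref{proposition:intimate_parents}) shows this intersection lies in $\operatorname{sur}(Z_{\alpha(i)};\mathcal{G}_Z)$. Children and later spouses are absent from the early prefixes. Earlier spouses are non-neighbours in $\mathcal{M}_{Z_{\alpha[i]}}$, where $Z_{\alpha(i)}$ is a sink. A parent missing some child $Z_{\alpha(l)}$ is non-adjacent to that neighbour in $\mathcal{M}_{Z_{\alpha[l]}}$, where $Z_{\alpha(l)}$ is a sink whose moral neighbours are exactly its parents. A smaller point: the coefficient of $\partial^3\log p/\partial Z_i^2\partial Z_j$ in $\partial^3/\partial\hat Z_k^2\partial\hat Z_l$ is $(\partial Z_i/\partial\hat Z_k)^2\,\partial Z_j/\partial\hat Z_l+2(\partial Z_i/\partial\hat Z_k)(\partial Z_j/\partial\hat Z_k)(\partial Z_i/\partial\hat Z_l)$, so your ``pick $l$'' trick does not work as stated. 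The clean route, as in the paper, is to take $l=k$ and use the $\partial^3/\partial Z_i^3$ entry, whose coefficient is $(\partial Z_i/\partial\hat Z_k)^3$.
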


\begin{algorithm}[t]
	\caption{Iterative identification of Markov network and sink nodes}
	\label{alg:crl}
	\hspace*{0.02in} {\bf Input:}
	$p^{(u)}(X),u=1,\dots,m$\\
	\hspace*{0.02in} {\bf Output:}
	$\mathcal{G}_{\hat{Z}}$ and $\hat{Z}$

 \begin{algorithmic}[1]
    \FOR{$t=1,\dots,n-1$}
        \STATE Learn $(\hat{g}^t,p_{\hat{Z}}^t,\mathcal{G}_{\hat{Z}}^t)$ to achieve \cref{eq:observational_equivalence} with minimal number of edges for Markov network $\mathcal{M}^{t}_{\hat{Z}_{[n+1-t]}}$, while (i) constraining the local structure (including incoming and outgoing edges) of $\{\hat{Z}_i\}_{i=n+2-t}^n$ in $\mathcal{G}_{\hat{Z}}^t$ to be identical to those in $\mathcal{G}_{\hat{Z}}^{t-1}$, and (ii) ensuring $\mathcal{M}^{t}_{\hat{Z}_{[k]}}\subseteq \mathcal{M}^{t-1}_{\hat{Z}_{[k]}}$ for $k=n+2-t,\dots,n$.
        \STATE Let $\hat{Z}_i$ a sink node in DAG $\mathcal{G}_{\hat{Z}_{[n+1-t]}}^t$. Reorder $\hat{Z}$ in $(\hat{g}^t,p_{\hat{Z}}^t,\mathcal{G}_{\hat{Z}}^t)$ by swapping $\hat{Z}_i$ and $\hat{Z}_{n+1-t}$.
    \ENDFOR
	\STATE {\bfseries return} $\mathcal{G}^{n-1}_{\hat{Z}}$ and  $(\hat{g}^{n-1})^{-1}(X)$
	\end{algorithmic}
\end{algorithm}

The proof of the theorem is provided in \cref{app:theorem_identifiability_anm}, which makes use of \cref{lemma:zero_derivative_anm}. Here, the word `solely' implies that $\hat{Z}_{\pi(i)}$ is not a function of other latent variables $Z_j$ not specified above. We now outline the key intuitions behind the assumptions and the identifiability result. Intuitively speaking, \cref{assumption:sufficient_changes_markov_network,assumption:sufficient_changes_intimate_neighbors_anm} impose certain type of sufficient change conditions on the distribution of latent variables. Note that different forms of sufficient change conditions have been adopted in nonlinear ICA~\citep{hyvarinen2023nonlinear} and CRL~\citep{zhang2024causal}.

First, under \cref{assumption:sufficient_changes_markov_network}, observational equivalence, together with a sparsity constraint on the recovered Markov network, allows us to recover the true Markov network (or moral graph under faithfulness assumption) up to isomorphism and to identify the latent variables up to permutation and mixing with their intimate neighbors, similar to the result of \citet{zhang2024causal}. Next, going beyond Markov network, \cref{assumption:sufficient_changes_intimate_neighbors_anm} enables us to further identify the sink nodes of the DAG, whose parents are precisely their corresponding neighbors in the recovered Markov network. At this stage, having identified one of the sink nodes and its parents, we can fix the local structure of the sink node, and proceed to learn the edges among the remaining nodes, while ensuring that the Markov network (or moral graph) remains consistent with the previous iterations.\footnote{For clarity, in Lines 2 and 3 of \cref{alg:crl}, we handle only one sink node per iteration, even though multiple sink nodes may exist.}  This iterative procedure allows us to progressively identify sink nodes and their local structure at each step, and move to the next stage. Ultimately, this process leads to the identification of the entire DAG, as well as the latent variables up to mixing with their surrounding parents.\looseness=-1

The resulting method is described in \cref{alg:crl}. Due to the permutation indeterminacy of the latent variables, we design \cref{alg:crl} to output $\hat{Z}$ in a causal order with respect to the latent DAG $\mathcal{G}_{\hat{Z}}$. It is worth noting that the overall idea is analogous to ordering-based causal discovery methods~\citep{raskutti2018learning,ghoshal2018learning,rolland2022score}, which aim to identify the causal order assuming that all causal variables are observed. However, since the relevant causal variables are unobserved in our setting, our result requires careful reasoning over the latent variables and latent DAG, leading to entirely different assumptions and algorithmic procedure.

\begin{example}[Identifiability result]
Consider the true latent DAG $\mathcal{G}_Z:Z_1\rightarrow Z_2\leftarrow Z_3$. Applying \cref{theorem:identifiability_anm}, there exist a permutation $\pi$ of the estimated latent variables $\hat{Z}$, such that $\mathcal{G}_{\hat{Z}_\pi}$ and $\mathcal{G}_Z$ are identical. Also, we have: (a) $\hat{Z}_{\pi(1)}$ is solely a function of $Z_1$, (b) $\hat{Z}_{\pi(2)}$ is solely a function of $\{Z_1,Z_2,Z_3\}$, and (c) $\hat{Z}_{\pi(3)}$ is solely a function of $Z_3$. Here, both $Z_1$ and $Z_3$ are identified up to component-wise transformation.
\end{example}
In the example above, the identifiability theory by \citet{zhang2024causal} can only recover the moral graph of $\mathcal{G}_Z$, i.e., an undirected triangle. Moreover, in their result, each estimated latent variable $\hat{Z}_{\pi(i)}$ can still be a function of all latent variables $\{Z_1,Z_2,Z_3\}$, indicating that no disentanglement is achieved. In contrast, our identifiability theory significantly advances upon their result by providing a stronger recovery of the latent causal structure and latent variables.

Additionally, \citet{jin2023learning} investigated the setting of single-node and coupled soft interventions, showing that the true latent DAG can be fully recovered, with the latent variables identified up to permutation and mixing with surrounding parents. We obtain the same level of identifiability as theirs, despite not requiring either single-node interventions or coupled interventions. 

\paragraph{Number of environments.} \cref{assumption:sufficient_changes_markov_network,assumption:sufficient_changes_intimate_neighbors_anm} require multiple environments to satisfy the assumption of linear independence across partial derivatives of latent distributions. Denote by $|\Delta(\mathcal{M}_Z)|$ be the number of triangles in the Markov network (or moral graph under the faithfulness assumption) over latent variables $Z$. The number of environments we need is (at least) $3n+3|\mathcal{E}(\mathcal{M}_Z)|+|\Delta(\mathcal{M}_Z)|-2|\mathcal{S}(\mathcal{G}_Z)|+1$. Note that \citet{zhang2024causal} require (at least) $2n+|\mathcal{E}(\mathcal{M}_Z)|+1$ environments for recovering the latent Markov network, while we leverage more environments to recover the latent DAG. 

\paragraph{Discussion of assumptions.}
Intuitively speaking, \cref{assumption:sufficient_changes_markov_network,assumption:sufficient_changes_intimate_neighbors_anm} requires that the causal mechanisms among the latent variables change sufficiently across different environments. Such distribution changes, together with the invariant mixing function, provide useful information for us to recover the latent variables and their causal relations.

To illustrate, consider a model with three latent variables that follow the SEM: $Z_1 = \epsilon_1^{(u)}$, $Z_2 = f_2^{(u)}(Z_1)+\epsilon_3^{(u)},$ and $Z_3 = f_3^{(u)}(Z_2)+\epsilon_2^{(u)}$, where functions $f_2^{(u)}$ and $f_3^{(u)}$ are parameterized by multilayer perceptrons (MLPs) (which can approximate any function). Here, the latent DAG is $Z_1\rightarrow Z_2\rightarrow Z_3$. Suppose that the variances of the noise terms $\epsilon_i^{(u)}$ and the weights of the MLPs are generated randomly for different environment $u$. Then, the sufficient change assumptions will hold almost surely.

Furthermore, \cref{assumption:sufficient_changes_markov_network,assumption:sufficient_changes_intimate_neighbors_anm} support both soft and hard interventions, as well as single-node or multi-node interventions. These assumptions also do not assume prior knowledge of intervention targets or their coupling patterns. Therefore, the desiderata described in \cref{section:desiderata} can be satisfied.

To discuss some implications, our assumptions imply that the functions $f_i^{(u)},i\in[n]$ cannot be linear in all environments. Otherwise, all third-order derivatives involved in \cref{assumption:sufficient_changes_intimate_neighbors_anm} will vanish across environments, rendering the assumption unsatisfiable. Loosely speaking, this suggests that nonlinear functions may sometimes offer greater potential for satisfying such assumptions.

\paragraph{With nonparametric mixing.}
Existing results addressing nonparametric mixing often assume that the coupling pattern of single-node interventions is known, i.e., which interventions target the same latent variables~\citep{vonkugelgen2023nonparametric,jin2023learning}. This assumption arises in part because their proofs rely primarily on first-order derivatives of the distribution of latent variables, necessitating knowledge of which causal mechanisms change together to establish sufficient changes on the distributions of latent variables. \citet{zhang2024causal} improved on this by leveraging second-order derivatives to recover the Markov network without requiring knowledge of the coupling pattern. In this work, we utilize third-order derivatives to achieve stronger identifiability results, allowing us to fully recover the latent DAG without assuming prior knowledge of the coupling pattern.\looseness=-1

\subsection{Estimation Method}\label{sec:estimation_method_anm}
Building on the identifiability result in \cref{theorem:identifiability_anm}, we now develop a practical method for \cref{alg:crl}. Specifically, Line 2 of \cref{alg:crl} requires achieving observational equivalence in \cref{eq:observational_equivalence}, which involves maximum likelihood estimation. Here, we use variational autoencoder (VAE) \citep{kingma2014autoencoding}, following previous works \citep{khemakhem2020variational,zhang2024causal}. Other estimation methods can also be adopted, such as normalizing flows~\citep{danilo2015variational,dinh2017density}.

Specifically, for Line 2 of \cref{alg:crl}, we maximize the marginal likelihood for each environment as 
\begin{align*}
    &\log p^{(u)}(X) \\
    &= \log \int_{\hat{Z}} p^{(u)}(X|\hat{Z})p^{(u)}(\hat{Z})d\hat{Z}\\ \nonumber
    &=\log \int_{\hat{Z}} \frac{q^{(u)}(\hat{Z}|X)}{q^{(u)}(\hat{Z}|X)}p^{(u)}(X|\hat{Z})p^{(u)}(\hat{Z})d\hat{Z}\\ \nonumber
    &\geq \mathbb{E}_{q^{(u)}(\hat{Z}\mid X)}(\log p^{(u)}(X|\hat{Z}))-D_{KL}(q^{(u)}(\hat{Z}|X), p^{(u)}(\hat{Z}))\\
    &=-\mathcal{L}_{\text{elbo}},
\end{align*}
where second term denotes the Kullback–Leibler (KL) divergence between between the prior distribution $p^{(u)}(\hat{Z})$ and the approximate posterior distribution $q^{(u)}(\hat{Z}|X)$. Using standard proof technique \citep{khemakhem2020variational,lachapelle2024nonparametric}, we can show that if the posterior $q^{(u)}(\hat{Z}|X)$ has enough capacity to express the true posterior, then Eq. (2) can be satisfied. Following existing works, we assume that the posterior $q^{(u)}(\hat{Z}|X)$ is a multivariate Gaussian distribution, where its mean and diagonal covariance are output by an encoder that is modeled as a MLP. The decoder, also modeled as an MLP, outputs the mean of $p^{(u)}(X|\hat{Z})$ that is assumed to be a multivariate Gaussian distribution, with a pre-specified variance, and thus the first term reduces to the reconstruction error of data from estimated latent variables $\hat{Z}$. Note that this decoder corresponds to the mixing function $\hat{g}$.

Now we need to construct a proper prior distribution $p^{(u)}(\hat{Z})$ to recover the dependent latent variables. Following \cref{eq:anm}, we model $\log p^{(u)}(\hat{Z})$ using the Gaussian log-likelihood:
\begin{flalign*}
\log p^{(u)}(\hat{Z})=&-\frac{1}{2}\sum_{i=1}^n \left(\frac{\hat{Z}_i- \hat{f}^{(u)}(\hat{A}_{i,:} \hat{Z})-\hat{\mu}_i^{(u)}}{\hat{\sigma}_i^{(u)}}\right)^2\\
&\qquad -\frac{1}{2}\sum_{i=1}^n \log(\hat{\sigma}_i^{(u)})^2,
\end{flalign*}
where where $\hat{f}_i^{(u)}$ is constructed as a three-layer MLP and $\hat{A}$ is a learnable binary adjacency matrix that represents the estimated DAG $\mathcal{G}_{\hat{Z}}$. Specifically, $\hat{A}_{i,j}=1$ indicates $\hat{Z}_j\rightarrow \hat{Z}_i$. We restrict $\hat{A}$ to be strictly lower triangular to avoid cyclic causal relations. Furthermore, we learn $\hat{\mu}_i^{(u)}$ and $\hat{\sigma}_i^{(u)}$ via $\hat{\mu}^{(u)} = h_i^{\mu}(u)$ and $\hat{\sigma}_i^{(u)} = h_i^{\sigma}(u)$, where $h_i^{\mu}$ and $h_i^{\sigma}$ are MLPs. Furthermore, in \cref{alg:crl}, we can find the sink nodes according to binary adjacency matrix $\hat{A}$, and constrain the local structure (which corresponds to constraint~(i) in Line 2 of \cref{alg:crl}) by fixing the corresponding parameters in the adjacency matrix $\hat{A}$ to certain binary values and further restricting them to be non-learnable during the training process.

We now discuss how to handle constraint (ii) in Line 2 of \cref{alg:crl}. Denote by $A^{t-1}$ the binary adjacency matrix estimated in the previous (i.e., $(t-1)$-th) iteration. Let $\mathcal{L}_\text{moral}$ be defined as
\begin{flalign*}
\sum_{k=n+2-t}^{n}&\big\|(I + \hat{A}_{[k],[k]})^T (I + \hat{A}_{[k],[k]})\\
&\qquad -(I + A^{t-1}_{[k],[k]})^T (I + A^{t-1}_{[k],[k]})\big\|_F^2.
\end{flalign*}
Instead of a hard constraint $\mathcal{L}_\text{moral}=0$ which can be enforced by constrained optimization methods, we simply treat it as a regularization term. Moreover, to enforce sparsity, we impose a constraint on the moral graph via the following regularization term, i.e., $\mathcal{L}_\text{sparsity}=\|(I + \hat{A})^T (I + \hat{A})\|_1$. The full objective function becomes $\mathcal{L}_\text{elbo}+\lambda_1\mathcal{L}_\text{sparsity}+\lambda_2\mathcal{L}_\text{moral}$.

After \cref{alg:crl} ends, the outputs of the encoder correspond to the estimated latent variables $\hat{Z}$ from observations $X$, while the binary matrix $\hat{A}$ represents the recovered DAG $\mathcal{G}_{\hat{Z}}$. In our experiments, we use a simplified version of the estimation approach by setting $\lambda_2=0$, which is found to also work well in practice.\looseness=-1

\section{CRL with Latent Heteroscedastic Noise Models}\label{sec:hnm}
After presenting the identifiability result for ANMs in \cref{sec:anm}, we now consider heteroscedastic noise models (HNMs)~\citep{xu2022causal,immer2023identifiability} that may be more general, where the noise terms are not assumed to have constant variances:
\begin{equation}\label{eq:hnm}
Z_i = f_i^{(u)}(\textrm{PA}(Z_i;\mathcal{G}_Z))+\sigma_i^{(u)}(\textrm{PA}(Z_i;\mathcal{G}_Z))\epsilon_i^{(u)},\,\,\,\,\, i\in[n].
\end{equation}
Here, the noise term $\epsilon_i$ follows Gaussian distribution, and functions $f_i^{(u)}$ and $\sigma_i^{(u)}$ are third-order differentiable. In the context of causal discovery from observed variables, it has been shown that the above causal model is identifiable under certain conditions \citep{xu2022causal,immer2023identifiability}. The estimation method for HNMs is nearly identical to that for ANMs in \cref{sec:estimation_method_anm}, which we omit here.

\begin{figure*}
    \centering
   \begin{tabular}{cc}
      \includegraphics[width=0.49\linewidth]{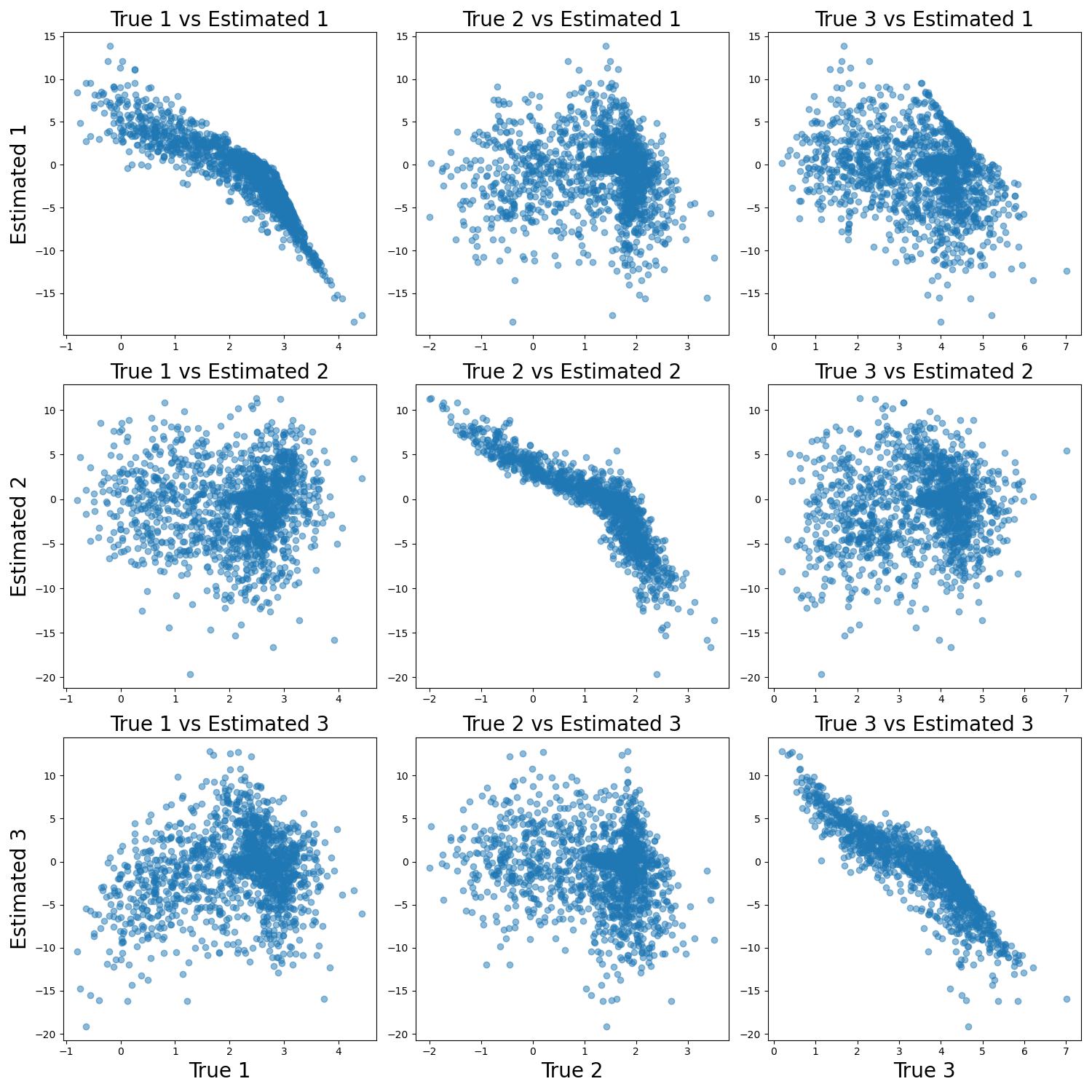}
       &\includegraphics[width=0.49\linewidth]{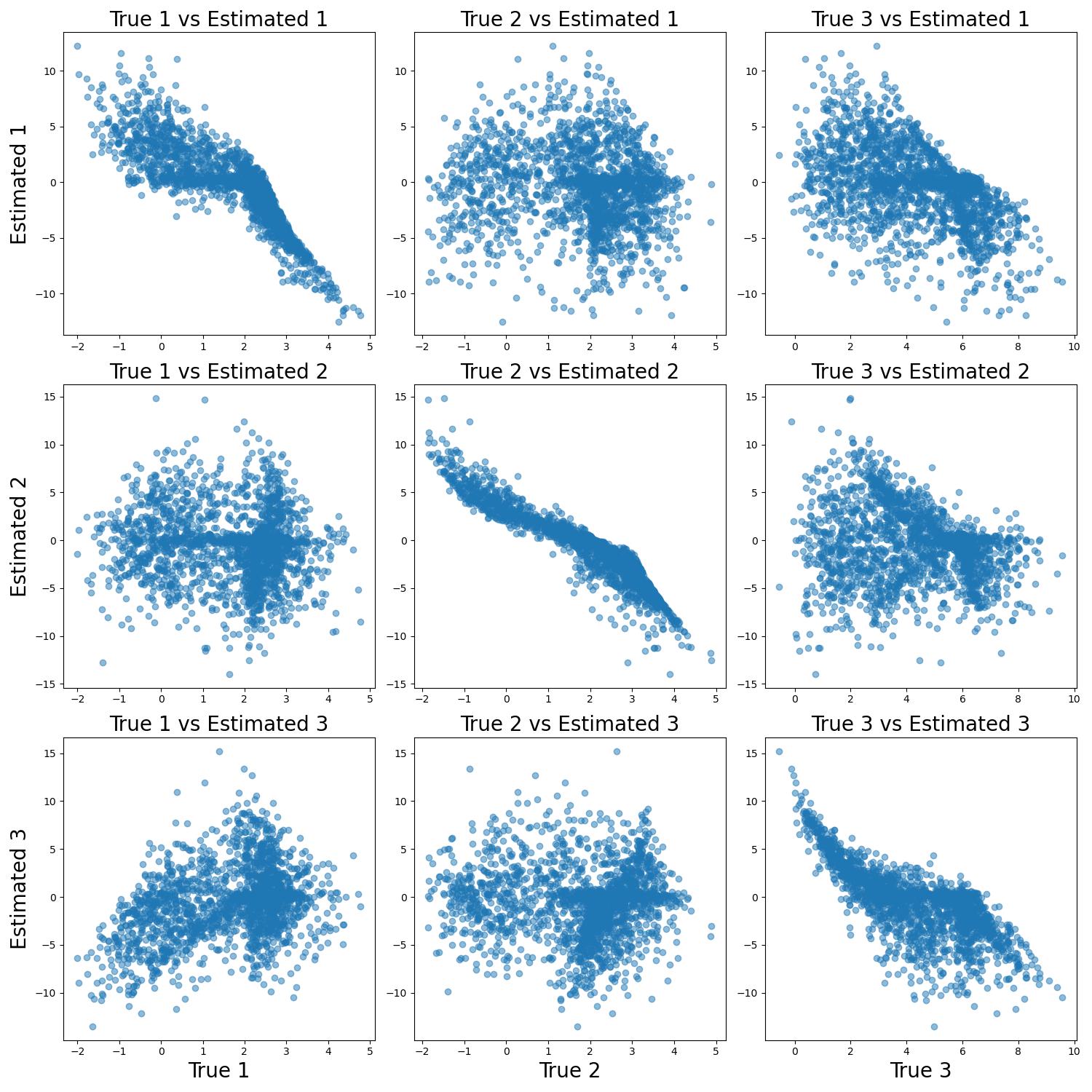} 
   \end{tabular}
    \caption{The recovered latent variables $\hat{Z}$ versus the true latent variables $Z$ under two latent DAGs: (1) $Z_1\rightarrow Z_2 \rightarrow Z_3$ and (2) $Z_1, Z_2\rightarrow Z_3$. 
    }
    \label{fig:simulation_results}
\end{figure*}

\subsection{Third-Order Derivatives of Sink Nodes}
Analogous to \cref{lemma:zero_derivative_anm} for ANMs, we present a corresponding result for HNMs. Specifically, the third-order derivatives of the sink node exhibit a property that provides valuable information for inferring the latent DAG. The proof is provided in \cref{app:proof_lemmas_zero_derivative_hnm}.
\begin{restatable}{lemma}{LemmaZeroDerivativeHNM}\label{lemma:zero_derivative_hnm}
Consider the data generating process in \cref{eq:hnm} and let $Z_i$ be a sink node in DAG $\mathcal{G}_Z$. Then,
\[
\frac{\partial \log p^{(u)}(Z)}{\partial Z_i^2 \partial Z_j}=0 \quad\textrm{for}\quad Z_j\not\in \textrm{PA}(Z_i;\mathcal{G}_Z).
\]
\end{restatable}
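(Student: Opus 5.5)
We would prove \cref{lemma:zero_derivative_hnm} by writing out the log-density through the Markov factorization of the latent SEM in \cref{eq:hnm} and isolating the only factor that involves the sink node. Since $Z_i$ is a sink node in $\mathcal{G}_Z$, it appears in no other conditional, so
\[
\log p^{(u)}(Z)=\log p^{(u)}\big(Z_i\mid \textrm{PA}(Z_i;\mathcal{G}_Z)\big)+\sum_{k\neq i}\log p^{(u)}\big(Z_k\mid \textrm{PA}(Z_k;\mathcal{G}_Z)\big).
\]
None of the terms in the sum depends on $Z_i$. Therefore $\partial^2/\partial Z_i^2$ annihilates all of them, and only the first term needs to be examined. (We read the notation $\partial Z_i^2\partial Z_j$ as the third-order derivative $\partial^3/\partial Z_i^2\partial Z_j$, and assume $j\neq i$, as in \cref{lemma:zero_derivative_anm}.)

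Next we make the first term explicit using Gaussianity of $\epsilon_i^{(u)}$. Given its parents, $Z_i$ is Gaussian with mean $f_i^{(u)}(\textrm{PA}_i)$ and standard deviation $\sigma_i^{(u)}(\textrm{PA}_i)$. Writing $\textrm{PA}_i$ for $\textrm{PA}(Z_i;\mathcal{G}_Z)$, this gives
\[
\log p^{(u)}(Z_i\mid \textrm{PA}_i)=-\frac{\big(Z_i-f_i^{(u)}(\textrm{PA}_i)\big)^2}{2\,\sigma_i^{(u)}(\textrm{PA}_i)^2}-\log \sigma_i^{(u)}(\textrm{PA}_i)-\tfrac12\log(2\pi).
\]
This step implicitly needs $\sigma_i^{(u)}>0$, i.e., the change-of-variables $\epsilon_i=(Z_i-f_i)/\sigma_i$ is valid. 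We will state this positivity explicitly, since it is needed for $p^{(u)}$ to have full support anyway.

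Differentiating twice in $Z_i$ yields
\[
\frac{\partial^2\log p^{(u)}(Z)}{\partial Z_i^2}=-\frac{1}{\sigma_i^{(u)}(\textrm{PA}_i)^2},
\]
which is a function of $\textrm{PA}_i$ only. Differentiating once more with respect to $Z_j$, the result is $0$ whenever $Z_j\notin \textrm{PA}_i$, and $j\neq i$ is already assumed. This proves the claim.

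The contrast with \cref{lemma:zero_derivative_anm} is worth noting. There $\sigma_i^{(u)}$ is constant, so the derivative vanishes for all $j$. Here it may fail to vanish for parents, which is why the statement only covers non-parents.

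There is no real obstacle. The only points requiring care are justifying the factorization with $Z_i$ appearing in a single factor, which follows because $Z_i$ is a sink, and the differentiability needed to interchange the derivatives. The latter is guaranteed by $f_i^{(u)}$ and $\sigma_i^{(u)}$ being third-order differentiable together with $\sigma_i^{(u)}>0$.
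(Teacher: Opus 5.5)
Your proof is correct and follows the paper's argument essentially verbatim. Both use the Markov factorization, isolate the sink's Gaussian conditional, compute $\partial^2\log p^{(u)}(Z)/\partial Z_i^2=-1/\sigma_i^{(u)}(\textrm{PA}_i)^2$, and observe that this depends only on the parents.

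One correction: drop the restriction $j\neq i$. Neither this lemma nor \cref{lemma:zero_derivative_anm} imposes it, and $Z_i\notin\textrm{PA}(Z_i;\mathcal{G}_Z)$ by acyclicity, so the same formula immediately gives $\partial^3\log p^{(u)}(Z)/\partial Z_i^3=0$. This $j=i$ case is exactly what the identifiability proof uses: it differentiates the sink's second derivative once more with respect to the sink itself, and it drops sink nodes' pure third derivatives from $w(Z',u)$.
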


\subsection{Identifiability Theory}
We now present the identifiability theory for latent HNMs in general environments under nonparametric mixing. As with the setting of ANMs, the key idea is to properly leverage sufficient change conditions on the causal mechanisms of latent variables up to third-order derivatives, building on \cref{lemma:zero_derivative_hnm}.

The assumptions and results are presented below, with a proof provided in \cref{app:theorem_identifiability_hnm}. The assumptions and results here closely mirror those for ANMs, and we refer readers to \cref{sec:identifiability_anm} for a detailed explanation. 
\begin{assumption}[Sufficient changes for sink nodes]\label{assumption:sufficient_changes_intimate_neighbors_hnm}
Let $Z'$ denote any subset of latent variables $Z$ that includes all of their ancestors. For each value of $Z'$, there exist $L+1$ values of $u$, i.e., $u_j$ with $j=0,\dots,L$, such that the vectors $w(Z',u_j)-w(Z',u_0)$ with $j=1,\dots,L$ are linearly independent, where $L\coloneqq |w(Z', u)|$ and vector $w(Z', u)$ is defined as:
\begin{flalign*}
& w(Z', u)=\left(\frac{\partial \log p^{(u)}(Z')}{\partial  Z_i}\right)_{Z_i\in Z'}\\
& \qquad\qquad\oplus \left(\frac{\partial^2 \log p^{(u)}(Z')}{\partial  Z_i^2}\right)_{Z_i\in Z'}\\
& \qquad\qquad\oplus \left(\frac{\partial^2 \log p^{(u)}(Z')}{\partial  Z_i \partial  Z_j}\right)_{i<j\cl\{Z_i,Z_j\}\in\mathcal{E}(\mathcal{M}_{Z'})}\\
& \qquad\qquad\oplus \left(\frac{\partial^3 \log p^{(u)}(Z')}{\partial  Z_i^3}\right)_{Z_i\in Z'\setminus \mathcal{S}(\mathcal{G}_{Z'})}\\
& \qquad\qquad\oplus \left(\frac{\partial^3 \log p^{(u)}(Z')}{\partial  Z_i^2 \partial  Z_j}\right)_{i<j\cl\{Z_i,Z_j\}\in\mathcal{E}(\mathcal{M}_{Z'})}\\
& \qquad\qquad \oplus \left(\frac{\partial^3 \log p^{(u)}(Z')}{\partial  Z_i \partial  Z_j\partial Z_k}\right)\mathstrut_{\begin{subarray}{l} \vspace{0.2em}\\i<j<k\cl \{Z_i,Z_j\},\\\{Z_j,Z_k\},\{Z_i,Z_k\}\in\mathcal{E}(\mathcal{M}_{Z'})\end{subarray}}.
\end{flalign*}
\end{assumption}

\begin{restatable}[Identifiability with latent HNMs]{theorem}{ThmIdentifiabilityHNM}\label{theorem:identifiability_hnm}
Consider the data generating process in \cref{eq:data_generating_process,eq:hnm}. Suppose that \cref{assumption:sufficient_changes_markov_network,assumption:sufficient_changes_intimate_neighbors_hnm}, as well as the faithfulness assumption, hold. Let $\mathcal{G}_{\hat{Z}}$ and $\hat{Z}$ be the output of \cref{alg:crl}. Then, there exists a permutation $\pi$ of the estimated latent variables $\hat{Z}$, denoted as $\hat{Z}_{\pi}$, such that:
\begin{enumerate}
\item (Identifiability of $\mathcal{G}_Z$) $\mathcal{G}_{\hat{Z}_\pi}$ and $\mathcal{G}_Z$ are identical.
\item (Identifiability of $Z$) $\hat{Z}_{\pi(i)}$ is solely a function of a subset of $\{Z_i\}\cup\operatorname{sur}(Z_i;\mathcal{G}_Z)$.
\end{enumerate}
\end{restatable}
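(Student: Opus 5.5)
The proof follows the structure of the proof of \cref{theorem:identifiability_anm}, using \cref{lemma:zero_derivative_hnm} in place of \cref{lemma:zero_derivative_anm}. It proceeds by induction over the iterations of \cref{alg:crl}, removing one sink node at a time.

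\textbf{Change of variables.} Set $h\coloneqq \hat{g}^{-1}\circ g$, which is a $\mathcal{C}^2$-diffeomorphism with $\hat{Z}=h(Z)$. By \cref{eq:observational_equivalence} and the change-of-variables formula, for every $u$
\[
\log p^{(u)}(Z)=\log \hat{p}^{(u)}(\hat{Z})+\log|\det J_h(Z)|.
\]
The Jacobian term does not depend on $u$. We differentiate this identity with respect to $Z$ up to third order and subtract the equation for $u_0$ from the equation for each $u_j$, which cancels the Jacobian term. This leaves the difference vector of true derivatives written in terms of differences of estimated derivatives in $\hat{Z}$, multiplied by coefficients that are products of first, second and third derivatives of $h^{-1}$.

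\textbf{Step 1: Markov network.} We first take $Z'=Z$. Using only the first- and second-order entries, \cref{assumption:sufficient_changes_markov_network} together with the minimal-edge requirement in Line 2 gives the argument of \citet{zhang2024causal}. The conclusion is that $\mathcal{M}_{\hat{Z}}$ is isomorphic to $\mathcal{M}_Z$ under a permutation $\pi$, and that each $\hat{Z}_{\pi(i)}$ depends only on $Z_i\cup\Psi(Z_i;\mathcal{M}_Z)$. The permutation structure of the Jacobian is what allows later steps to match estimated nodes to true nodes.

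\textbf{Step 2: sink nodes.} This is the main obstacle. Let $\hat{Z}_k$ be the node taken as a sink of $\mathcal{G}^t_{\hat{Z}}$ in Line 3. Applying \cref{lemma:zero_derivative_hnm} to the estimated HNM, the entries $\partial^3\log\hat{p}/\partial\hat{Z}_k^3$ and $\partial^3\log\hat{p}/\partial\hat{Z}_k^2\partial\hat{Z}_l$ vanish for every non-parent $\hat{Z}_l$. We substitute these zeros into the differentiated third-order identity.

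In the resulting linear system, the coefficients multiplying the third-order components of $w(Z,u_j)-w(Z,u_0)$ must equal the corresponding sums of products of derivatives of $h^{-1}$. Some of those sums are forced to zero by the vanishing estimated entries. \cref{assumption:sufficient_changes_intimate_neighbors_hnm} states that the difference vectors are linearly independent, so the combinations that must vanish can only do so trivially.

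Two consequences need to be extracted from this.
\begin{itemize}
\item The true counterpart $Z_{\pi^{-1}(k)}$ must itself be a sink. If it were not, the entry $\partial^3/\partial Z_i^3$, which the assumption includes for non-sinks, would carry a nonzero coefficient, contradicting linear independence.
\item The dependence of $\hat{Z}_k$ shrinks from intimate neighbors to surrounding parents. Surrounding parents are exactly those parents of $Z_i$ adjacent to all of its other neighbors, so they are the only neighbors whose mixing cannot be detected by the $\partial^3/\partial Z_i^2\partial Z_j$ and triangle entries.
\end{itemize}
Here the HNM version of the lemma is weaker than the ANM version: the $\partial Z_i^2\partial Z_j$ terms vanish only for non-parents. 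For this reason \cref{assumption:sufficient_changes_intimate_neighbors_hnm} keeps all of these entries, and I will track parent and non-parent indices separately. Once the sink is identified, its parents are exactly its neighbors in the recovered Markov network, which follows from \cref{eq:cross_de} and faithfulness. This fixes its local structure in $\mathcal{G}_{\hat{Z}}$.

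\textbf{Step 3: induction.} We remove the identified sink. The remaining variables $Z'=Z\setminus\{Z_{\pi^{-1}(k)}\}$ form an ancestral set, and marginalizing out a sink leaves an HNM over $Z'$ with the same mechanisms, so both assumptions apply to $Z'$. Constraints (i) and (ii) of Line 2 ensure that the new estimate agrees with the earlier local structures and Markov networks, so Steps 1 and 2 can be repeated on $Z'$. After $n-1$ iterations we obtain the whole DAG, which proves part~1. Combining the surrounding-parent dependence established at each stage proves part~2.
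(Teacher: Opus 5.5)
Your sink-identification step matches the paper. The cube coefficient $(\partial Z_j/\partial\hat Z_k)^3$ must vanish for true non-sinks, so the counterpart of $\hat Z_k$ is a true sink, and its parents are then read off the recovered Markov network. To invoke the assumption, which is about the \emph{true} vector $w$, differentiate the change-of-variables identity in $\hat Z$. Then the vanishing estimated entry $\partial^3\log\hat p/\partial\hat Z_k^3$ becomes a $u$-independent linear combination of entries of $w$. Differentiating in $Z$, as you wrote, expresses true entries through estimated ones, which is the wrong direction.

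\textbf{Gap 1: the recursion step.} Step~3 does not justify recursing on $Z'$. The next iteration needs $\hat Z\setminus\{\hat Z_k\}$ and $Z'$ to be related by a diffeomorphism, so that a change-of-variables identity holds between their marginals. Marginalizing out the true sink only handles the true side. You must show that no $Z_j\in Z'$ is a function of $\hat Z_k$, and then apply a block-triangular (Cayley--Hamilton) argument as in \cref{lemma:sub_diffeomorphism}. The third-order system gives this only for non-sinks. For another true sink $Z_s$ it cannot force $\partial Z_s/\partial\hat Z_k=0$: $\partial^3\log p/\partial Z_s^3\equiv 0$ is not an entry of $w$, and the remaining constraints only make $Z_s$ affine in $\hat Z_k$. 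The paper closes this with a second-order argument that your proposal lacks. Distinct sinks are non-adjacent in the moral graph, so by the recovered Markov network $\hat Z_k\notin\Psi(\hat Z_s;\mathcal{M}_{\hat Z'})$. Zhang et al.'s result that each true variable depends only on its estimated counterpart and that counterpart's intimate neighbors then excludes $\hat Z_k$.

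\textbf{Gap 2: the mechanism for part~2.} Surrounding parents are not ``parents adjacent to all other neighbors.'' In $P\to Z_i\to C\to D\leftarrow P$, $P$ is a parent of $Z_i$ and is adjacent to $C$ in the moral graph, yet $\operatorname{sur}(Z_i;\mathcal{G}_Z)=\emptyset$. The third-order equation at the stage where $\hat Z_k$ is a sink constrains which true variables depend on $\hat Z_k$, i.e.\ a column of $\partial Z/\partial\hat Z$. It says nothing about which true variables $\hat Z_k$ depends on. Also, by that stage the children of its true counterpart have been marginalized out, so in the current sub-DAG every parent is vacuously surrounding. No local argument at that stage can do the refinement.

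The paper gets part~2 by bookkeeping across iterations. Reapplying Zhang et al.\ to each sub-diffeomorphism shows that $\hat Z_{\pi(i)}$ depends only on $Z_i\cup\bigcap_k\Psi(Z_i;\mathcal{M}_{Z_{\alpha[k]}})$. Here the intersection runs over all prefixes $Z_{\alpha[k]}$ of the recovered causal order that contain $Z_i$. A purely graphical result, \cref{proposition:intimate_parents}, then places this intersection inside $\operatorname{sur}(Z_i;\mathcal{G}_Z)$. If a parent $Z_j$ of $Z_i$ is not a parent of some child $Z_l$, consider $\mathcal{M}_{Z_{\alpha[l]}}$: there $Z_l$ is a sink adjacent to $Z_i$, but $Z_j$ is not adjacent to $Z_l$, so $Z_j$ is not an intimate neighbor of $Z_i$. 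In the example, $P$ is eliminated by the Markov network over $\{P,Z_i,C\}$.
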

\section{Simulation Studies}\label{sec:experiments}
We conduct simulation studies to verify our identifiability theory. We consider three latent variables that follow our predefined DAG and the data generating procedure in \cref{eq:anm}. The mean and variances of $\epsilon_i^{(u)}$ are randomly sampled across different environments, and $f_i^{(u)}$ is modeled as a two-layer MLP that consists of weights randomly sampled from $\operatorname{Unif}[-2,2]$ and LeakyReLU activation function. For the mixing function, we employ a two-layer MLP to transform the latent variables $Z$ into observed variables $X$, ensuring injectivity by constraining the weight matrix to be orthogonal and using the LeakyReLU activation function. It is worth noting that the data generating procedure considered here satisfies the desiderata described in \cref{section:desiderata}. During the estimation process, we employ three-layer MLPs as the encoder and decoder of VAE. We use the Adam optimizer \citep{kingma2014adam} to train the model with learning rate of $0.001$.\looseness=-1

The results are shown in \cref{fig:simulation_results}, where each subfigure is a $3\times 3$ panel. The panel on $i$-th row and $j$-th column displays the relationship between the estimated latent variable $\hat{Z}_i$ and the true latent variable $Z_j$. Our method successfully recovers the correct structure in both cases. Notably, there is a clear one-to-one correspondence between $Z_1$ and $\hat{Z}_1$, as well as between $Z_2$ and $\hat{Z}_2$, indicating that these variables are identified up to component-wise transformations. The correspondence between $Z_3$ and $\hat{Z}_3$ is less pronounced, which is expected since $\hat{Z}_3$ is recovered as a function of both $Z_3$ and its surrounding parent $Z_2$. These observations validate our identifiability theory in \cref{theorem:identifiability_anm}. Further empirical studies on selecting number of latent variables and the hyperparameters are provided in \cref{app:select_number_latent,app:select_sparsity}, respectively.
\section{Conclusion}\label{sec:conclusion}
We formalize a set of desiderata to establish more realistic assumptions about changing environments in CRL, which we refer to as general environments. Under this setting, we show that it is possible to fully recover the latent DAG and identify the latent variables up to minor indeterminacies, while allowing for nonparametric mixing and nonlinear latent causal models, such as ANMs and HNMs with Gaussian noise. Our results properly leverage sufficient change conditions on the causal mechanisms up to third-order derivatives, to extract causal ordering information and fully recover the latent DAG. We validate our identifiability theory through simulation studies. Future works include applying the proposed method to real-world data and further relaxing the assumption of Gaussian noise.\looseness=-1

\section*{Acknowledgments}
The authors would like to thank the reviewers, Yewei Xia, and Yiqing Li for their helpful comments. We would like to acknowledge the support from NSF Award No. 2229881, AI Institute for Societal Decision Making (AI-SDM), the National Institutes of Health (NIH) under Contract R01HL159805, and grants from Quris AI, Florin Court Capital, and MBZUAI-WIS Joint Program. IN acknowledges the support of the Natural Sciences and Engineering Research Council of Canada (NSERC) Postgraduate Scholarships – Doctoral program.

\bibliographystyle{abbrvnat}
\bibliography{ms}

\vspace{-0.2em}
\section*{Checklist}



 \begin{enumerate}

 \item For all models and algorithms presented, check if you include:
 \begin{enumerate}
   \item A clear description of the mathematical setting, assumptions, algorithm, and/or model. [Yes] The assumptions are clearly stated in each theorem statement.
   \item An analysis of the properties and complexity (time, space, sample size) of any algorithm. [Not Applicable] The primary focus of this work is on identifiability theory.
   \item (Optional) Anonymized source code, with specification of all dependencies, including external libraries.
 \end{enumerate}

 \item For any theoretical claim, check if you include:
 \begin{enumerate}
   \item Statements of the full set of assumptions of all theoretical results. [Yes] The assumptions are clearly stated in each theorem statement.
   \item Complete proofs of all theoretical results. [Yes] The proofs are provided in the supplementary materials.
   \item Clear explanations of any assumptions. [Yes]
 \end{enumerate}

 \item For all figures and tables that present empirical results, check if you include:
 \begin{enumerate}
   \item The code, data, and instructions needed to reproduce the main experimental results (either in the supplemental material or as a URL). [Yes]
   \item All the training details (e.g., data splits, hyperparameters, how they were chosen). [Yes] The details are explained in Section 6.
         \item A clear definition of the specific measure or statistics and error bars (e.g., with respect to the random seed after running experiments multiple times). [Not Applicable]
         \item A description of the computing infrastructure used. (e.g., type of GPUs, internal cluster, or cloud provider). [Not Applicable]
 \end{enumerate}

 \item If you are using existing assets (e.g., code, data, models) or curating/releasing new assets, check if you include:
 \begin{enumerate}
   \item Citations of the creator If your work uses existing assets. [Not Applicable]
   \item The license information of the assets, if applicable. [Not Applicable]
   \item New assets either in the supplemental material or as a URL, if applicable. [Not Applicable]
   \item Information about consent from data providers/curators. [Not Applicable]
   \item Discussion of sensible content if applicable, e.g., personally identifiable information or offensive content. [Not Applicable]
 \end{enumerate}

 \item If you used crowdsourcing or conducted research with human subjects, check if you include:
 \begin{enumerate}
   \item The full text of instructions given to participants and screenshots. [Not Applicable]
   \item Descriptions of potential participant risks, with links to Institutional Review Board (IRB) approvals if applicable. [Not Applicable]
   \item The estimated hourly wage paid to participants and the total amount spent on participant compensation. [Not Applicable]
 \end{enumerate}

 \end{enumerate}


\clearpage
\appendix

\onecolumn 

\part{Appendices} 

\section{Extended Discussion of Related Works}\label{app:related_works}
\paragraph{Nonlinear ICA.}
Despite the lack of dependence among latent variables, nonlinear ICA is a challenging problem because the latent variables are generally not identifiable without any further assumptions \citep{hyvarinen1999nonlinear}.
Existing work in nonlinear ICA  often relies on assumptions about changing distributions
where different
distributions are indicated by auxiliary variables such as time indices and domain indices 
\citep{hyvarinen2016unsupervised, hyvarinen2017nonlinear,hyvarinen2019nonlinear, khemakhem2020variational,sorrenson2020disentanglement, lachapelle2024nonparametric,halva2020hidden, halva2021disentangling,yao2022temporally} or specific constraints on the mixing function, including function classes \citep{hyvarinen1999nonlinear, taleb1999source, gresele2021independent, buchholz2022function} or sparsity constraint \citep{zhengidentifiability, zheng2023generalizing}.

\paragraph{Causal representation learning.}
CRL aims to recover the latent variables and their causal relations from data. Without any further assumptions, the identifiability of the hidden generating process is known to be impossible.
One line of work focuses on purely observational data by adding parametric and graphical assumptions. In the linear case, \citet{silva2006learning} recover the Markov equivalence class for the one-factor model, while  \citet{xie2020generalized, cai2019triad} estimate the latent variables and their relations by assuming non-Gaussian noises and at least twice measured variables as pure children as latent ones, which is later extended to latent hierarchical structure \citep{xie2022identification}. Moreover, \citet{adams2021identification} provides necessary and sufficient graphical conditions for the identification of linear non-Gaussian model with latent variables. As for the linear  Gaussian case, \citet{huang2022latent} leverage rank deficiency constraints of sub-covariance matrix over observed variables to identify latent hierarchical structure, which is later extended by  \citet{dong2023versatile} to structures where all variables including latent and observed ones are allowed to be flexibly related. In the discrete setting, \citet{kivva2021learning} also provide identification results for latent causal graphs up to Markov equivalence, by assuming a mixture model with specific graphical assumptions. 

Due to the difficulty of using purely observational data for CRL, researchers have also been working on another line of thought where data from multiple environments are available and the changes of distributions are often assumed to arise from soft or hard interventions. Specifically, \citet{squires2023linear} assume linear SEM and mixing function and leverage single-node interventions for the identifiability,  \citet{varici2023score} consider nonlinear SEM and linear mixing function, \citet{varici2023score, buchholz2023learning, jiang2023learning} concern the setting of the nonparametric latent SEM and linear mixing function, which is further generalized by  \citet{ahuja2023interventional} to nonparametric SEM and polynomial mixing functions, and \citet{brehmer2022weakly, vonkugelgen2023nonparametric, jiang2023learning} consider the general nonparametric settings for both the causal model and mixing function. From the perspective of assumptions about interventions,
\citet{squires2023linear} make use of single-node interventions,  \citet{brehmer2022weakly} rely on counterfactual pre- and post-intervention views, \citet{vonkugelgen2023nonparametric} utilize paired interventions per node, and  \citet{zhang2023identifiability} explore soft interventions. Other works include \citet{yang2021causalvae,shen2022weakly,liang2023causal} that require more supervision information, \citet{kori2023causal} that require a causal ordering prior, \citet{yao2024multiview,xu2024sparsity} that focus on multi-view data, \citet{morioka2024causal} that leverage certain graphical constraint,  \citet{ahuja2023interventional,wang2021desiderata} that utilize further constraint on the latent support, and \citet{lippe2022citris} that provides results for interventions on known targets.

At the same time, many of the existing results rely on assumptions about interventions that may be overly restrictive in practice. As such,  we formalize a set of desiderata about realistic assumptions about changing environments, which we refer to as  CRL from general environments. Under this setting, we propose the first identifiability result that allows nonlinear SEM and nonparametric mixing, while the two strongest existing works either have to require linear mixing \citep{varici2024linear}, or can only identify the latent DAG up to moral graph \citep{zhang2024causal}.

\begin{table*}[t]
    \label{tab:related_works_complete}
    \centering
    \caption{Comparison of of several existing identifiability results of multi-environment CRL based on hard or soft interventions. We only outline the key assumptions and results, while omitting some additional assumptions required by several studies. `Env.' stands for environments.}
    \vskip 0.1in
    \scalebox{0.84}{
    \begin{tabular}{ccccccc}
        \toprule
        \multirow{2}{*}{Work}  & Latent & Mixing & Desiderata & General & Identifiability of  & Identifiability of\\
        & SEM & Function & Satisfied? & Env.? & Latent Variables  & Latent DAG\\
        \midrule
        \citet{liu2023identifying}  & General & Linear & 3  & No & Full  & Full\\
        \hline
        \multirow{2}{*}{\citet{squires2023linear}}  & Linear & Linear & 2 & No & Full  & Full\\
        & Linear & Linear & 1,2 & No & Up to ancestors  & Transitive closure\\
        \hline
        \multirow{2}{*}{\citet{buchholz2023learning}}  & Linear & General & 2  & No & Full  & Full\\
        & Linear & General & 1,2  & No &   & Partial order\\
        \hline
        \citet{zhang2023identifiability}  & Nonlinear & Polynomial & 1,2  & No & Up to ancestors  & Transitive closure\\
        \hline
        \citet{liu2024identifiable}  & Polynomial & General & 3 & No & Full  & Full\\
        \hline
        \multirow{4}{*}{\citet{varici2024scorebased}} & General & Linear & 2 & No & Full  & Full\\
          & General & Linear & 1,2  & No & Up to ancestors  & Transitive closure\\
          & Nonlinear & Linear & 1,2 & No & Up to surrounding  & Full\\
          & General & \textbf{General} & 2 & No & Full  & Full\\
        \hline
        \multirow{2}{*}{\citet{ahuja2023interventional}} & General & Polynomial & 2  & No & Full & Full\\
        & Bounded RV & Polynomial & 1,2  & No & Full & Full\\
        \hline
        \citet{vonkugelgen2023nonparametric}  & General & \textbf{General} & None  & No & Full  & Full\\
        \hline
        \multirow{2}{*}{\citet{jin2023learning}}  & General & \textbf{General} & 1  & No & Up to surrounding & Full\\
          & Linear & Linear & 1,2,3  & \textbf{Yes} & Up to surrounding  & Full\\
        \hline
        \citet{bing2024identifying}  & General & Linear & 2,3  & No & Full  & Full\\
        \hline
        \multirow{2}{*}{\citet{varici2024linear}}  & ANM & Linear & 2,3  & No & Full & Full\\
        & General & Linear & 1,2,3  & \textbf{Yes} & Up to ancestors & Transitive closure\\
        \hline
        \citet{zhang2024causal}  & General & \textbf{General} & 1,2,3 & \textbf{Yes} & Up to intimate neighbors & Moral graph\\
        \hline
        \multirow{2}{*}{\textbf{Ours}}  & ANM & \textbf{General} & 1,2,3 & \textbf{Yes} & Up to surrounding parents & Full\\
          & HNM & \textbf{General} & 1,2,3 & \textbf{Yes} & Up to surrounding parents & Full\\
        \bottomrule
    \end{tabular}
    }
\end{table*}

\paragraph{Ordering-based causal discovery.}
The idea that a causal DAG can be partially represented by a topological ordering has a long history \citep{verma1990causal}. Such a line of thought typically searches the ordering space instead of the DAG space, e.g., with greedy Markov Chain Monte Carlo \citep{friedman2003being}, greedy hill-climbing  \citep{teyssier2012ordering}, restricted maximum likelihood estimators \citep{teyssier2012ordering}, and sparsest permutation \citep{raskutti2018learning,lam2022greedy,solus2021consistency}. Further, assumptions about  functional causal models and noise variances can also be made \citep{ghoshal2018learning,chen2019causal} to sequentially identify leave nodes based on the estimated precision matrix. More recently, score-matching based methods for causal discovery have been proposed. This line of work typically assumes nonlinear additive Gaussian noise models \citep{rolland2022score}, while other settings are also considered \citep{montagna2023assumption,montagna2023scalable,montagna2023causal,sanchez2023diffusion}.

\section{Proof of \cref{proposition:coupled_single_node_interventions}}\label{app:proof_proposition_coupled_single_node_interventions}
\PropositionCoupledSingleNodeInterventions*
\begin{proof}
The ``$\Leftarrow$'' side is trivial. We now prove the ``$\Rightarrow$'' side. For all single-node interventions $I^{(1)},\dots,I^{(m)}$, suppose that we know which of them share the same targets. That is, we can perform a partition of $\{I^{(1)},\dots,I^{(m)}\}$ into $\{\mathcal{I}_1,\dots,\mathcal{I}_n\}$, such that (1) the interventions in $\mathcal{I}_i$ share the same targets, and (2) the interventions in $\mathcal{I}_i$ do not share the same targets with the interventions in $\mathcal{I}_j,j\neq i$. Now suppose we perform a random permutation $\pi$ of $Z$, denoted as $Z_\pi$, and assign $Z_{\pi(i)}$ as the target of interventions in $\mathcal{I}_i$. Clearly, one of the permutation $\pi^*$ will lead to the correct assignment of intervention targets, but is unknown to us. This implies that we know the intervention targets (e.g., the assignment we perform with $Z_\pi$) up to variable permutation.
\end{proof}

\section{Proofs of \cref{lemma:zero_derivative_anm,lemma:zero_derivative_hnm}}\label{app:proof_lemmas_zero_derivative}
\subsection{Proof of \cref{lemma:zero_derivative_anm}}\label{app:proof_lemmas_zero_derivative_anm}
This lemma is implied by \citet{rolland2022score}, and we provide the proof here for completeness.
\LemmaZeroDerivativeANM*
\begin{proof}
Since the distribution $P_Z^{(u)}$ and the DAG $\mathcal{G}_Z$ satisfy the Markov property, we have
\[
p^{(u)}(Z)=\prod_{k=1}^n p^{(u)}(Z_k\,|\,\textrm{PA}(Z_k; \mathcal{G}_Z)).
\]
By assumption, $Z_i$ is a sink node in DAG $\mathcal{G}_Z$, which implies
\begin{flalign*}
\frac{\partial^2 \log p^{(u)}(Z)}{\partial Z_i^2}&=\frac{\partial^2 \log p^{(u)}(Z_i\,|\,\textrm{PA}(Z_i; \mathcal{G}_Z))}{\partial Z_i^2}\\
&=\frac{\partial^2}{\partial Z_i^2}\left(-\frac{1}{2}\left(\frac{Z_i - f_i^{(u)}(\textrm{PA}(Z_i;\mathcal{G}_Z))}{\sigma_i^{(u)}}\right)^2-\frac{1}{2}\log (2\pi (\sigma_i^{(u)})^2)\right)\\
&=-\frac{1}{\sigma_i^{(u)}}\cdot\frac{\partial}{\partial Z_i}\left(\frac{Z_i - f_i^{(u)}(\textrm{PA}(Z_i;\mathcal{G}_Z))}{\sigma_i^{(u)}}\right)\\
&=-\frac{1}{(\sigma_i^{(u)})^2},
\end{flalign*}
where $(\sigma_i^{(u)})^2$ is the variance of $\epsilon_i^{(u)}$. Therefore, for any $Z_j$, we have
\[
\frac{\partial^3 \log p^{(u)}(Z)}{\partial Z_i^2 \partial Z_j}=0.
\]
\end{proof}

\subsection{Proof of \cref{lemma:zero_derivative_hnm}}\label{app:proof_lemmas_zero_derivative_hnm}
\LemmaZeroDerivativeHNM*
\begin{proof}
Since the distribution $P_Z^{(u)}$ and the DAG $\mathcal{G}_Z$ satisfy the Markov property, we have
\[
p^{(u)}(Z)=\prod_{k=1}^n p^{(u)}(Z_k\,|\,\textrm{PA}(Z_k; \mathcal{G}_Z)).
\]
By assumption, $Z_i$ is a sink node in DAG $\mathcal{G}_Z$, which implies
\begin{flalign*}
\frac{\partial^2 \log p^{(u)}(Z)}{\partial Z_i^2}&=\frac{\partial^2 \log p^{(u)}(Z_i\,|\,\textrm{PA}(Z_i; \mathcal{G}_Z))}{\partial Z_i^2}\\
&=\frac{\partial^2}{\partial Z_i^2}\left(-\frac{1}{2}\left(\frac{Z_i - f_i^{(u)}(\textrm{PA}(Z_i;\mathcal{G}_Z))}{\sigma_i^{(u)}(\textrm{PA}(Z_i;\mathcal{G}_Z))}\right)^2\right)\\
&=-\frac{1}{\sigma_i^{(u)}(\textrm{PA}(Z_i;\mathcal{G}_Z))}\cdot\frac{\partial}{\partial Z_i}\left(\frac{Z_i - f_i^{(u)}(\textrm{PA}(Z_i;\mathcal{G}_Z))}{\sigma_i^{(u)}(\textrm{PA}(Z_i;\mathcal{G}_Z))}\right)\\
&=-\frac{1}{(\sigma_i^{(u)}(\textrm{PA}(Z_i;\mathcal{G}_Z)))^2}.
\end{flalign*}
Therefore, for any $Z_j\not\in \textrm{PA}(Z_i;\mathcal{G}_Z)$, we have
\[
\frac{\partial^3 \log p^{(u)}(Z)}{\partial Z_i^2 \partial Z_j}=0.
\]
\end{proof}

\vspace{-1em}
\section{Proof of \cref{theorem:identifiability_anm}}
\label{app:theorem_identifiability_anm}
\ThmIdentifiabilityANM*
\begin{proof}
Let $(\hat{g},p_{\hat{Z}},\mathcal{G}_{\hat{Z}})$ be an output of Line 3 in \cref{alg:crl} during the $(n-1)$-th iteration, which corresponds to the output of \cref{alg:crl}. Since both the true mixing function $g$ and the estimated mixing function $\hat{g}$ are diffeomorphisms onto their images, the transformation from $\hat{Z}$ to $Z$ is also a diffeomorphism. Therefore, there exists a permutation  such that the Jacobian matrix $\frac{\partial Z_{\alpha}}{\partial \hat{Z}}$ has nonzero diagonal entries (e.g., see \citet[Lemma~2]{zhang2024causal}). Denote $Z_{\alpha[k]}\coloneqq(Z_{\alpha(l)})_{l=1}^{k}$. By the faithfulness assumption and \cref{proposition:induction_anm_dag} for the case of $(n-1)$-th iteration, we conclude that the DAGs $\mathcal{G}_{\hat{Z}}$ and $\mathcal{G}_{Z_\alpha}$ are identical, and that each latent variable $\hat{Z}_{i},i\in[n]$ is solely a function of a subset of $Z_{\alpha(i)}\cup \bigcap_{k=i}^{n}\Psi(Z_{\alpha(i)};\mathcal{M}_{Z_{\alpha[k]}})$.

By definition, $\hat{Z}$ follows a causal order with respect to the latent DAG $\mathcal{G}_{\hat{Z}}$. Since $\mathcal{G}_{\hat{Z}}$ and $\mathcal{G}_{Z_\alpha}$ are identical, $\alpha$ is also a causal order of $Z$ with respect to $\mathcal{G}_{Z}$. Under the faithfulness assumption, \cref{proposition:intimate_parents} implies that $\hat{Z}_{i}$ is solely a function of a subset of
\[
Z_{\alpha(i)}\cup \bigcap_{k=i}^{n}\Psi(Z_{\alpha(i)};\mathcal{M}_{Z_{\alpha[k]}})\subseteq Z_{\alpha(i)}\cup \operatorname{sur}(Z_{\alpha(i)};\mathcal{G}_{Z}).
\]
By taking $\pi\coloneqq\alpha^{-1}$, we obtain the desired statements.
\end{proof}

\subsection{Second-Order Partial Derivative of Latent Distribution}
We provide the following lemma which will be used as the starting point to derive the third-order derivative in \cref{eq:third_order_derivative} in the proof of \cref{proposition:induction_anm_dag}. This lemma is obtained from \citet[Proposition~1]{zhang2024causal}. The proof involves change-of-variable formula \citep{benisrael1999change,gemici2016normalizing}, chain rule, and property of Markov network~\citep{lin1997factorizing}, which is omitted here.
\begin{lemma}[Second-order derivative]\label{lemma:second_order_derivative}
Consider the data generating process in \cref{eq:data_generating_process}. Suppose that we learn $(\hat{g},p_{\hat{Z}},\mathcal{G}_{\hat{Z}})$ to achieve \cref{eq:observational_equivalence}. Then, we have
\begin{flalign*}
\frac{\partial^2\log p^{(u)}(\hat{Z})}{\partial \hat{Z}_k \partial \hat{Z}_l} =& \sum_{i=1}^n \frac{\partial^2 \log p^{(u)}({Z})}{\partial Z_i^2} \frac{\partial  Z_i}{\partial \hat{Z}_l} \frac{\partial Z_i}{\partial \hat{Z}_k} +  \sum_{j=1}^n \sum_{\substack{i:\{Z_j,Z_i\}\in \mathcal{E}(\mathcal{M}_Z)}} \frac{\partial^2 \log p^{(u)}({Z})}{\partial Z_i \partial Z_j} \frac{\partial  Z_j}{\partial \hat{Z}_l} \frac{\partial Z_i}{\partial \hat{Z}_k} \\
& \qquad +\sum_{i=1}^n \frac{\partial \log p^{(u)}({Z})}{\partial  Z_i}\frac{\partial^2 Z_i}{\partial \hat{Z}_k \partial \hat{Z}_l} + \frac{\partial^2\log|\det J_v|}{\partial \hat{Z}_k \partial \hat{Z}_l},
\end{flalign*}
where $v\coloneqq  g^{-1}\circ\hat{g}$ is a diffeomorphism.
\end{lemma}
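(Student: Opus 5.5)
The plan is a direct computation: express the density of $\hat{Z}$ through the density of $Z$ using the map $v\coloneqq g^{-1}\circ\hat{g}$, then differentiate twice with the chain rule, and finally use the Markov network property in \cref{eq:cross_de} to drop the vanishing cross terms.

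\textbf{Step 1 (density relation).} Since $g$ and $\hat{g}$ are $\mathcal{C}^2$-diffeomorphisms onto their images, and \cref{eq:observational_equivalence} forces the two images to coincide with the support of $p_X^{(u)}$, the map $v=g^{-1}\circ\hat{g}$ is a diffeomorphism between open subsets of $\mathbb{R}^n$, and $Z=v(\hat{Z})$. I will work with $v$ directly, rather than with the non-square Jacobians of $g$ and $\hat{g}$ (recall $d\ge n$). Pushing $p_Z^{(u)}$ and $p_{\hat{Z}}^{(u)}$ forward to the same $p_X^{(u)}$ and cancelling the common measure-change factor of the embedding gives
\[
\log p^{(u)}(\hat{Z})=\log p^{(u)}(v(\hat{Z}))+\log|\det J_v(\hat{Z})|.
\]
This identity holds for every environment $u$, with the same $v$ in each.

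\textbf{Step 2 (first derivative).} Differentiating with respect to $\hat{Z}_l$ gives
\[
\frac{\partial\log p^{(u)}(\hat{Z})}{\partial\hat{Z}_l}=\sum_i \frac{\partial\log p^{(u)}(Z)}{\partial Z_i}\frac{\partial Z_i}{\partial \hat{Z}_l}+\frac{\partial \log|\det J_v|}{\partial \hat{Z}_l}.
\]

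\textbf{Step 3 (second derivative).} Differentiating again with respect to $\hat{Z}_k$ and applying the product rule produces three groups of terms:
\begin{enumerate}
\item a double sum $\sum_{i,j}\frac{\partial^2\log p^{(u)}(Z)}{\partial Z_i\partial Z_j}\frac{\partial Z_j}{\partial\hat{Z}_l}\frac{\partial Z_i}{\partial\hat{Z}_k}$, obtained by differentiating the score through $Z=v(\hat{Z})$;
\item the term $\sum_i\frac{\partial\log p^{(u)}(Z)}{\partial Z_i}\frac{\partial^2 Z_i}{\partial\hat{Z}_k\partial\hat{Z}_l}$, obtained by differentiating the Jacobian entries;
\item the Hessian of $\log|\det J_v|$.
\end{enumerate}

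\textbf{Step 4 (pruning the double sum).} I split the double sum into its diagonal part $i=j$, which is the first term of the lemma, and its off-diagonal part $i\neq j$. By \cref{eq:cross_de}, $\partial^2\log p^{(u)}(Z)/\partial Z_i\partial Z_j=0$ whenever $Z_i\independent Z_j\mid Z_{[n]\setminus\{i,j\}}$, that is, whenever $\{Z_i,Z_j\}\notin\mathcal{E}(\mathcal{M}_Z)$. So the off-diagonal sum reduces to pairs adjacent in $\mathcal{M}_Z$. The resulting index placement matches the statement up to a harmless relabeling, since the mixed partial is symmetric in $(i,j)$.

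\textbf{Main obstacle.} The only delicate point is Step 1: justifying the change-of-variables identity when $d>n$, and ensuring enough regularity. Specifically, the Hessian of $\log|\det J_v|$ involves third derivatives of $v$, so I will assume (as \citet{zhang2024causal} implicitly do) that $v$ is smooth enough, e.g.\ $\mathcal{C}^3$, for these derivatives to exist. The argument handles the dimension mismatch by noting that, on the common image manifold $\mathcal{X}$, both densities are taken with respect to the same volume measure. The volume factors $\sqrt{\det(J_g^\top J_g)}$ and $\sqrt{\det(J_{\hat{g}}^\top J_{\hat{g}})}$ then combine into $|\det J_v|$ by the chain rule $J_{\hat{g}}=J_g J_v$. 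Everything after that is routine calculus.
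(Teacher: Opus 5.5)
Your proposal is correct and follows the route the paper indicates. The paper omits the proof and defers to \citet[Proposition~1]{zhang2024causal}, but its ingredients are the same as yours: the change-of-variables identity $\log p^{(u)}(\hat{Z})=\log p^{(u)}(Z)+\log|\det J_v|$, two applications of the chain rule, and the Markov-network property \cref{eq:cross_de} to discard the cross terms for non-adjacent pairs. As a minor remark, you do not need the extra $\mathcal{C}^3$ assumption on $v$ for this lemma: $\log|\det J_v|=\log p^{(u)}(\hat{Z})-\log p^{(u)}(v(\hat{Z}))$ is a difference of twice-differentiable functions, so its Hessian exists automatically.
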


\subsection{Relation between Markov Network and Moral Graph}
We provide the following lemma to relate the Markov network and moral graph, which is useful for proving \cref{proposition:induction_anm_dag,proposition:intimate_parents}.
\begin{lemma}[{\citet[Lemma~1]{zhang2024causal}}]\label{lemma:moral_graph_subgraph}
Consider a DAG $\mathcal{G}_Z$ and distribution $P_{Z}$ with its Markov Network $\mathcal{M}_Z$. Suppose that the Markov assumption holds. Then, the undirected graph defined by $\mathcal{M}_Z$ is a subgraph of the moral graph of DAG $\mathcal{G}_Z$.
\end{lemma}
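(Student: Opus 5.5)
The statement to prove is \cref{lemma:moral_graph_subgraph}: under the Markov assumption, every edge of the Markov network $\mathcal{M}_Z$ is an edge of the moral graph of $\mathcal{G}_Z$. I will prove the contrapositive. Fix $i\neq j$ such that $Z_i$ and $Z_j$ are not adjacent in the moral graph. The goal is to show $Z_i \independent Z_j \mid Z_{[n]\setminus\{i,j\}}$, which by definition means $\{Z_i,Z_j\}\notin\mathcal{E}(\mathcal{M}_Z)$.

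The main tool is the Markov factorization
\[
p(Z)=\prod_{k=1}^n p(Z_k\mid \textrm{PA}(Z_k;\mathcal{G}_Z)).
\]
This is the same factorization used in the proofs of \cref{lemma:zero_derivative_anm,lemma:zero_derivative_hnm}. Each factor depends only on the family $\{Z_k\}\cup\textrm{PA}(Z_k;\mathcal{G}_Z)$. Non-adjacency in the moral graph says two things:
\begin{itemize}
\item neither of $Z_i,Z_j$ is a parent of the other;
\item they share no common child.
\end{itemize}
Hence no family contains both $Z_i$ and $Z_j$. I would then group the factors. Let $A$ be the product of the factors whose family contains $Z_i$, and let $B$ be the product of the remaining factors. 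Then $A$ is a function of $(Z_i, Z_{[n]\setminus\{i,j\}})$ only, and $B$ is a function of $(Z_j, Z_{[n]\setminus\{i,j\}})$ only. This gives a factorization
\[
p(Z)=\phi(Z_i,Z_{-ij})\,\psi(Z_j,Z_{-ij}).
\]

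From this product form, conditional independence follows by a standard argument. Divide by the marginal $p(Z_{-ij})$, which is positive by the full-support assumption. The conditional density then factorizes into a function of $Z_i$ and a function of $Z_j$, given $Z_{-ij}$, so $Z_i \independent Z_j \mid Z_{[n]\setminus\{i,j\}}$. Alternatively, in the smooth positive setting of the paper, I can argue through \cref{eq:cross_de}: $\log p$ is a sum of a term without $Z_j$ and a term without $Z_i$, so $\partial^2\log p/\partial Z_i\partial Z_j=0$.

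The only delicate step is the bookkeeping in the grouping: I must check that every factor containing $Z_j$ avoids $Z_i$. This is exactly where both halves of the moral-graph non-adjacency condition are used (no direct edge, no common child). Nothing beyond the Markov assumption is needed; faithfulness is not required for this direction.
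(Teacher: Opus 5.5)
Your argument is correct and complete. The paper does not prove this lemma itself; it imports it from \citet[Lemma~1]{zhang2024causal}, so there is no in-paper argument to match, and your proof works as a self-contained justification.

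The key check is right. A family $\{Z_k\}\cup\textrm{PA}(Z_k;\mathcal{G}_Z)$ can contain both $Z_i$ and $Z_j$ only in three ways: $k=i$ (an edge $Z_j\to Z_i$), $k=j$ (an edge $Z_i\to Z_j$), or $k\notin\{i,j\}$ (a common child). Moral non-adjacency excludes all three, so your grouping is valid. The product form $\phi(Z_i,Z_{-ij})\,\psi(Z_j,Z_{-ij})$ then gives $Z_i\independent Z_j\mid Z_{[n]\setminus\{i,j\}}$ by the standard factorization criterion.

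The other standard route is purely graphical, via d-separation. With conditioning set $Z_{[n]\setminus\{i,j\}}$, every interior node of a path between $Z_i$ and $Z_j$ is conditioned on, so a path is active only if all its interior nodes are colliders. If an interior node is a collider, its path-neighbour has an edge pointing out along the path and so cannot also be a collider. Hence the only candidates are a direct edge or $Z_i\to Z_k\leftarrow Z_j$, both excluded by moral non-adjacency, and the global Markov property finishes.

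Your version uses the recursive factorization, which is equivalent to the Markov property for distributions with a density. It is the same factorization the paper invokes in the proofs of \cref{lemma:zero_derivative_anm,lemma:zero_derivative_hnm}. In effect it shows that $P_Z$ factorizes over the moral graph. The d-separation version needs neither densities nor positivity.

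Your cross-derivative alternative also needs no separate differentiability of $\phi$ and $\psi$. Since $\log\psi$ does not depend on $Z_i$, the partial derivative $\partial\log p/\partial Z_i$ equals $\partial\log\phi/\partial Z_i$, which exists and is automatically free of $Z_j$.
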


\subsection{Proof of \cref{lemma:sub_diffeomorphism}}
The following lemma shows how diffeomorphism exists between subset of variables, which is useful for proving \cref{proposition:induction_anm_dag}.
\begin{lemma}\label{lemma:sub_diffeomorphism}
Suppose that the transformation from $\hat{Z}\in\mathbb{R}^n$ to $Z\in\mathbb{R}^n$ is a diffeomorphism, and that there exists $\mathcal{I}\subseteq[n]$ such that each $Z_i,i\in \mathcal{I}$ is not a function of $\hat{Z}_j,j\not\in \mathcal{I}$. Then, the transformation from $\hat{Z}_{ \mathcal{I}}$ to $Z_{\mathcal{I}}$ is a diffeomorphism.
\end{lemma}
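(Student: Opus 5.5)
We need to prove \cref{lemma:sub_diffeomorphism}. Write $h\colon \hat Z\mapsto Z$ for the given diffeomorphism of $\mathbb{R}^n$ (onto its image), and write $\mathcal{I}^c\coloneqq[n]\setminus\mathcal{I}$. By hypothesis, the components $h_{\mathcal{I}}$ depend only on $\hat Z_{\mathcal{I}}$. So there is a $\mathcal{C}^1$ (indeed $\mathcal{C}^2$) map $h_{\mathcal{I}}'$ with $Z_{\mathcal{I}}=h_{\mathcal{I}}'(\hat Z_{\mathcal{I}})$. We need three facts about $h_{\mathcal{I}}'$: it is smooth, it is a bijection onto its image, and its inverse is smooth.

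\textbf{Jacobian.} First we show that $h'_{\mathcal{I}}$ has an invertible Jacobian. Order the coordinates as $(\mathcal{I},\mathcal{I}^c)$. Since $\partial Z_{\mathcal{I}}/\partial \hat Z_{\mathcal{I}^c}=0$, the Jacobian of $h$ is block lower triangular:
\[
J_h=\begin{pmatrix} \partial Z_{\mathcal{I}}/\partial \hat Z_{\mathcal{I}} & 0\\ \partial Z_{\mathcal{I}^c}/\partial \hat Z_{\mathcal{I}} & \partial Z_{\mathcal{I}^c}/\partial \hat Z_{\mathcal{I}^c}\end{pmatrix}.
\]
Hence $\det J_h=\det(\partial Z_{\mathcal{I}}/\partial \hat Z_{\mathcal{I}})\cdot\det(\partial Z_{\mathcal{I}^c}/\partial \hat Z_{\mathcal{I}^c})$. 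Because $h$ is a diffeomorphism, $\det J_h\neq 0$ everywhere. Therefore $J_{h'_{\mathcal{I}}}=\partial Z_{\mathcal{I}}/\partial \hat Z_{\mathcal{I}}$ is invertible at every point.

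\textbf{Injectivity.} This is the main obstacle, since a nonvanishing Jacobian only gives local invertibility. Suppose $h'_{\mathcal{I}}(a)=h'_{\mathcal{I}}(b)$ with $a\neq b$. Fix any $c\in\mathbb{R}^{|\mathcal{I}^c|}$. The points $(a,c)$ and $(b,c)$ then have the same $Z_{\mathcal{I}}$-part under $h$. That alone does not contradict injectivity of $h$, because their $Z_{\mathcal{I}^c}$-parts may differ. To get around this, we use the fact that for fixed $\hat Z_{\mathcal{I}}=a$, the map $c\mapsto h_{\mathcal{I}^c}(a,c)$ is a local diffeomorphism; by the block determinant it is injective. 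Its image is exactly the fiber of $h(\mathbb{R}^n)$ over $Z_{\mathcal{I}}=h'_{\mathcal{I}}(a)$. The fiber over $b$ is the same set, since the two $Z_{\mathcal{I}}$ values coincide. Indeed, the fiber of $h(\mathbb{R}^n)$ above a fixed value $z_{\mathcal{I}}$ equals $\bigcup_{\hat z_{\mathcal{I}}\in (h_{\mathcal{I}}')^{-1}(z_{\mathcal{I}})}h_{\mathcal{I}^c}(\hat z_{\mathcal{I}},\mathbb{R}^{|\mathcal{I}^c|})$, and injectivity of $h$ forces these sets to be disjoint for distinct $\hat z_{\mathcal{I}}$. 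Each such set is open in the fiber, being the image of a local diffeomorphism of $\mathbb{R}^{|\mathcal{I}^c|}$. So any choice of $\hat z_{\mathcal{I}}$ for $b$ yields a nonempty set disjoint from the one for $a$.

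This still does not rule out multiple preimages by itself; it only says the fiber splits into several disjoint open pieces. The way I would close the gap is to use the standing assumption of the paper that $p_Z$ has full support on $\mathbb{R}^n$, so that $h$ is onto $\mathbb{R}^n$ (both $Z$ and $\hat Z$ range over all of $\mathbb{R}^n$ under observational equivalence). Then each fiber is $\mathbb{R}^{|\mathcal{I}^c|}$, which is connected, so it cannot be a disjoint union of two or more nonempty open sets. Hence $(h'_{\mathcal{I}})^{-1}(z_{\mathcal{I}})$ is a single point, and $h'_{\mathcal{I}}$ is injective.

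\textbf{Conclusion.} Combining the steps, $h'_{\mathcal{I}}$ is an injective $\mathcal{C}^1$ map with everywhere-invertible Jacobian. By the inverse function theorem it is open, and its inverse is $\mathcal{C}^1$. So $h'_{\mathcal{I}}$ is a diffeomorphism onto its image, as required.

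If the surjectivity or connectedness route feels too reliant on the support assumption, the fallback is the standard argument that the lemma is used only in a setting where $Z_{\mathcal{I}}$ is a subset closed under ancestors, whose support is also the full space.
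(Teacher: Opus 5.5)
Your proof is correct, and it takes a genuinely different route from the paper's.

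\textbf{How the two routes differ.} The paper works with the inverse map. It writes $J^{-1}$ as a polynomial in $J$ via Cayley--Hamilton and notes that the zero block $J_{\mathcal{I},\mathcal{I}^c}=0$ survives in every power of $J$. Hence $\partial\hat Z_{\mathcal{I}}/\partial Z_{\mathcal{I}^c}=0$; this is just the fact that the inverse of a block-triangular matrix is block-triangular. The paper then reads this as ``$\hat Z_{\mathcal{I}}$ is solely a function of $Z_{\mathcal{I}}$.'' So the $\mathcal{I}$-block of $h^{-1}$ is a smooth two-sided inverse of $h'_{\mathcal{I}}$, and injectivity comes for free. You instead stay with the forward map. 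The block determinant gives invertibility of $\partial Z_{\mathcal{I}}/\partial\hat Z_{\mathcal{I}}$. Splitting each fiber into disjoint open pieces and using connectedness gives global injectivity. The inverse function theorem then finishes.

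\textbf{The shared global hypothesis.} Both routes rely on the same global input, but yours states it openly. The paper's step from a vanishing partial derivative to ``not a function of $Z_{\mathcal{I}^c}$'' silently needs the fibers of $h(\mathbb{R}^n)$ over $Z_{\mathcal{I}}$ to be connected. That holds because $h$ is onto $\mathbb{R}^n$, by full support of both $Z$ and $\hat Z$. This is not a formality. The map $h(a_1,a_2,c)=(e^{a_1}\cos a_2,\,e^{a_1}\sin a_2,\,a_2+\arctan c)$ is a diffeomorphism of $\mathbb{R}^3$ onto its image, and $(Z_1,Z_2)$ does not depend on $\hat Z_3$. Yet $(\hat Z_1,\hat Z_2)\mapsto(Z_1,Z_2)$ is the non-injective complex exponential, so the lemma fails without surjectivity.

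\textbf{What each buys.} The paper's argument is shorter. It also delivers the conclusion in the form the induction later uses: each $\hat Z_i$, $i\in\mathcal{I}$, depends only on $Z_{\mathcal{I}}$. Yours pinpoints exactly where global topology enters.

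\textbf{Three small repairs.}

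First, injectivity of $c\mapsto h_{\mathcal{I}^c}(a,c)$ comes from injectivity of $h$, since $h_{\mathcal{I}}(a,\cdot)$ is constant. It does not come from the block determinant. Your argument only needs this slice map to be a local diffeomorphism, hence open.

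Second, the claim that the image of this slice map is \emph{exactly} the fiber over $h'_{\mathcal{I}}(a)$ presupposes what you are proving; it is only one piece of your union. Drop it and rely on the disjoint-union description that follows.

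Third, the lemma is reapplied at each step of the paper's induction. So record that $h'_{\mathcal{I}}$ is onto $\mathbb{R}^{|\mathcal{I}|}$, which is immediate from surjectivity of $h$; this keeps the hypothesis you rely on valid at the next step. With that, the closing ``fallback'' about ancestrally closed subsets is unnecessary. It also targets the wrong object: what matters is the image of $h$, not the marginal support of $Z_{\mathcal{I}}$.
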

\begin{proof}
Let $J\coloneqq\frac{\partial Z}{\partial \hat{Z}}$ be the Jacobian matrix of the transformation from $\hat{Z}$ to $Z$. Since the transformation is a diffeomorphism, the Jacobian matrix $J$ is invertible. By the Cayley-Hamilton theorem, for each value, its inverse $\frac{\partial \hat{Z}}{\partial Z}=J^{-1}$ can be written as $\sum_{k=0}^{n} c_k J^k$ for some coefficients $c_0,\dots,c_{n}$. For $i\in \mathcal{I}$ and $j\not\in \mathcal{I}$, since $Z_i$ is, by definition, not a function of $\hat{Z}_j$, we clearly have $J_{i,j}=0$. Also, it is straightforward to show $J_{i,:}J_{:,j}=0$, which indicates $(J^2)_{i,j}=0$. By mathematical induction, we obtain $(J^k)_{i,j}=0$. This then implies
\[\frac{\partial \hat{Z}_i}{\partial Z_j}=\left(\frac{\partial \hat{Z}}{\partial Z}\right)_{i,j}=(J^{-1})_{i,j}=\left(\sum_{k=0}^{n} c_k J^k\right)_{i,j}=0.\]
That is, each $\hat{Z}_i,i\in \mathcal{I}$ is not a function of $Z_j,j\not\in \mathcal{I}$. This implies that each $Z_i,i\in \mathcal{I}$ is solely a function of $\hat{Z}_{\mathcal{I}}$, and vice versa, i.e., each $\hat{Z}_i,i\in \mathcal{I}$ is solely a function of $Z_{\mathcal{I}}$. Therefore, we conclude that the transformation from $\hat{Z}_{ \mathcal{I}}$ to $Z_{\mathcal{I}}$ is a diffeomorphism.
\end{proof}

\subsection{Proof of \cref{proposition:induction_anm_dag}}
We prove the following result via mathematical induction, which is crucial for the proof of \cref{theorem:identifiability_anm}. Specifically, it provides information on the identifiability of the latent DAG $\mathcal{G}_Z$ and latent variables $Z$ in each iteration of \cref{alg:crl}.

\begin{proposition}[Output of $t$-th iteration]\label{proposition:induction_anm_dag}
Consider the data generating process in \cref{eq:data_generating_process,eq:anm}. Suppose that \cref{assumption:sufficient_changes_markov_network,assumption:sufficient_changes_intimate_neighbors_anm}, as well as the faithfulness assumption, hold. Let $(\hat{g},p_{\hat{Z}},\mathcal{G}_{\hat{Z}})$ be an output of Line 3 in \cref{alg:crl} during the $t$-th iteration. Let $\alpha$ be a permutation such that the Jacobian matrix $\frac{\partial Z_{\alpha}}{\partial \hat{Z}}$ has nonzero diagonal entries. Then, we have the following statements:
\begin{enumerate}[label=(\alph*)]
\item For $i\in [n],j=n+1-t,\dots,n$, we have $\hat{Z}_{i}\rightarrow\hat{Z}_{j}$ in $\mathcal{G}_{\hat{Z}}$ if and only if $Z_{\alpha(i)}\rightarrow Z_{\alpha(j)}$ in $\mathcal{G}_Z$.
\item Each latent variable $\hat{Z}_{i},i\in[n]$ is solely a function of a subset of $Z_{\alpha(i)}\cup \bigcap_{k=\max(n+1-t,i)}^{n}\Psi(Z_{\alpha(i)};\mathcal{M}_{Z_{\alpha[k]}})$.
\item The transformation from $\hat{Z}_{[n-t]}\coloneqq(\hat{Z}_{i})_{i=1}^{n-t}$ to $Z_{\alpha[n-t]}\coloneqq(Z_{\alpha(i)})_{i=1}^{n-t}$ is a diffeomorphism, and the Jacobian matrix $\frac{\partial Z_{\alpha[n-t]}}{\partial \hat{Z}_{[n-t]}}$ has nonzero diagonal entries.
\end{enumerate}
\end{proposition}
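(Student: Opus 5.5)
We argue by induction on the iteration index $t$. For the base case $t=1$ we use an analogue of Zhang et al.'s Markov-network result, and the inductive step reduces to that base case plus a sink-identification argument on the subproblem over $\hat{Z}_{[n+1-t]}$. Throughout, write $v$ for the map $\hat{Z}\mapsto Z$, $J=\partial Z/\partial\hat{Z}$, and use the permutation $\alpha$ with nonzero diagonal of $\partial Z_\alpha/\partial\hat{Z}$.

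\textbf{Markov network step.} Fix the current subproblem. Its latent block $Z'=Z_{\alpha[n+1-t]}$ is ancestrally closed, because the previously removed variables are sinks of successive subgraphs; for $t=1$ it is all of $Z$. Differentiate the observational equivalence \cref{eq:observational_equivalence} in $\log$ form twice, as in \cref{lemma:second_order_derivative}. For a pair $(k,l)$ that is non-adjacent in the estimated Markov network, the left-hand side vanishes. Subtract the identity at $u_0$ from the identity at $u_j$. The Jacobian term $\log|\det J_v|$ cancels, which leaves a linear combination of the components of $\tau(Z',u_j)-\tau(Z',u_0)$ equal to zero. By \cref{assumption:sufficient_changes_markov_network}, each coefficient vanishes. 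Concretely:
\begin{itemize}
\item $\partial Z_i/\partial\hat{Z}_k\,\partial Z_i/\partial\hat{Z}_l=0$;
\item the cross products are zero for every true edge;
\item $\partial^2 Z_i/\partial\hat{Z}_k\partial\hat{Z}_l=0$.
\end{itemize}
Combining these with minimality of the number of edges (the sparsity in Line 2) and the nonzero diagonal, we obtain two facts, following Zhang et al.'s argument. First, $\mathcal{M}_{\hat{Z}_{[n+1-t]}}$ is isomorphic to $\mathcal{M}_{Z'}$ under $\alpha$. Second, $\hat{Z}_i$ depends only on $Z_{\alpha(i)}$ and its intimate neighbours in $\mathcal{M}_{Z_{\alpha[n+1-t]}}$.

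\textbf{Sink identification (main obstacle).} Let $\hat{Z}_s$ be the sink selected in Line 3. By \cref{lemma:zero_derivative_anm} applied to the estimated ANM, $\partial^3\log p(\hat Z)/\partial\hat Z_s^2\partial\hat Z_j=0$ for all $j$. Differentiate the change-of-variables identity three times and again subtract environments. The result is a vanishing linear combination of the components of $w(Z',u_j)-w(Z',u_0)$. The coefficient of a component $\partial^3\log p/\partial Z_i^3$ with $Z_i$ a non-sink is essentially $(\partial Z_i/\partial\hat Z_s)^2\,\partial Z_i/\partial\hat Z_j$. The coefficient of $\partial^3/\partial Z_i^2\partial Z_j$ with $Z_i$ a non-sink involves the analogous products. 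By \cref{assumption:sufficient_changes_intimate_neighbors_anm}, all these coefficients vanish. Taking $j=s$ and using that the diagonal entry $\partial Z_{\alpha(s)}/\partial \hat Z_s$ is nonzero, this forces $Z_{\alpha(s)}$ to be a sink of $\mathcal{G}_{Z'}$. The delicate point is bookkeeping: the product of three Jacobian entries, plus terms with second derivatives of $v$ multiplied by lower-order scores, must be shown to collapse. We plan to handle the latter by using the already-established vanishing of second-order coefficients from the Markov step, then read off the sink-ness of $Z_{\alpha(s)}$.

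\textbf{Closing the induction.} Since $Z_{\alpha(s)}$ is a sink, its parents are exactly its neighbours in $\mathcal{M}_{Z'}$, by faithfulness and \cref{lemma:moral_graph_subgraph}. The Markov-network isomorphism then gives statement (a) for the new index $n+1-t$. For indices $j>n+1-t$, statement (a) holds by the inductive hypothesis together with constraint (i). Statement (b) follows by intersecting the new intimate-neighbour restriction with the previous ones; constraint (ii) keeps the earlier Markov networks consistent. For (c), note that the remaining $Z_{\alpha(i)}$, $i\le n-t$, are not functions of $\hat{Z}_{n+1-t},\dots,\hat Z_n$, because the removed nodes are sinks and hence are not intimate neighbours needed by ancestors. \cref{lemma:sub_diffeomorphism} then gives the diffeomorphism on the first $n-t$ coordinates, and the diagonal of its Jacobian is inherited from $\alpha$.
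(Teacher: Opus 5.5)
Your overall route matches the paper's. You use induction over iterations and Zhang et al.'s Markov-network identification on the ancestrally closed block $Z'=Z_{\alpha[n+1-t]}$. At the estimated sink $\hat Z_s$ you take the third-order identity with \cref{assumption:sufficient_changes_intimate_neighbors_anm} to show $Z_{\alpha(s)}$ is a sink. Then come the moral-graph argument for (a), intersection for (b), and \cref{lemma:sub_diffeomorphism} for (c).

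The genuine gap is your justification of (c). You claim the remaining $Z_{\alpha(i)}$, $i\le n-t$, do not depend on the new sink $\hat Z_{n+1-t}$ ``because the removed nodes are sinks and hence are not intimate neighbours needed by ancestors.'' This is false: a sink can be an intimate neighbour of its parents. For $Z_1\to Z_2$ the Markov network is the single edge $\{Z_1,Z_2\}$. So $Z_2\in\Psi(Z_1;\mathcal{M}_Z)$, and the intimate-neighbour conclusion of your Markov step allows $Z_1$ to depend on both $\hat Z_1$ and $\hat Z_2$. Markov-network (second-order) information cannot rule this out in general. That is exactly the indeterminacy in Zhang et al. that the third-order derivatives exist to break, so an argument for (c) built only on intimate-neighbour reasoning cannot succeed.

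The missing ingredient is already in your sink-identification step; you only extracted part of it. With $j=s$, the assumption forces $(\partial Z_i/\partial\hat Z_s)^3=0$ for \emph{every} non-sink $Z_i$ of $\mathcal{G}_{Z'}$. Hence all non-sinks, in particular all ancestors of the removed node, are free of $\hat Z_{n+1-t}$; this is what handles the ancestors.

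The remaining case is the other sinks $Z_{\alpha(k)}\neq Z_{\alpha(n+1-t)}$ of $\mathcal{G}_{Z'}$. The third-order identity gives no such conclusion for them, since their pure third derivatives are not components of $w$. Here your intimate-neighbour idea does work:
\begin{itemize}
\item Two sinks are neither adjacent nor share a child, so by \cref{lemma:moral_graph_subgraph} they are non-adjacent in $\mathcal{M}_{Z'}$.
\item Hence $\hat Z_k$ and $\hat Z_{n+1-t}$ are non-adjacent in $\mathcal{M}_{\hat Z_{[n+1-t]}}$, so $\hat Z_{n+1-t}\notin\Psi(\hat Z_k;\mathcal{M}_{\hat Z_{[n+1-t]}})$.
\item Now use that $Z_{\alpha(k)}$ is solely a function of $\hat Z_k\cup\Psi(\hat Z_k;\mathcal{M}_{\hat Z_{[n+1-t]}})$. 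This is the direction of Zhang et al.'s result opposite to the one you recorded.
\end{itemize}
Only with both cases covered can \cref{lemma:sub_diffeomorphism} be invoked.

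Separately, the ``delicate bookkeeping'' is not an obstacle, and your proposed remedy would not apply. The terms with $\partial^2 Z_i/\partial\hat Z_s^2$ or $\partial^3 Z_i/\partial\hat Z_s^3$ multiply first- and second-order score differences, which are themselves components of $w$. So the assumption kills their coefficients directly. By contrast, the vanishing from your Markov step concerns only pairs non-adjacent in the estimated Markov network and says nothing about the diagonal pair $(s,s)$.
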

\begin{proof}
We prove the proposition by mathematical induction from $t=1$ to $n-1$. We first provide the proof for the inductive step. Suppose that \cref{proposition:induction_anm_dag} holds for the case of $t$. In the following, we show that it also holds for the case of $t+1$.

By definition, $(\hat{g},p_{\hat{Z}},\mathcal{G}_{\hat{Z}})$ denotes the output of Line 3 in \cref{alg:crl} during the $(t+1)$-th iteration, and $\alpha$ denotes a permutation such that the Jacobian matrix $\frac{\partial Z_{\alpha}}{\partial \hat{Z}}$ has nonzero diagonal entries, where $\hat{Z}$ is the estimated latent variables from Line 3 in \cref{alg:crl} during the $(t+1)$-th iteration. Due to the constraints (i) and (ii) in Line 2 of \cref{alg:crl}, $(\hat{g},p_{\hat{Z}},\mathcal{G}_{\hat{Z}})$ is also a valid output of Line 3 in \cref{alg:crl} during the $t$-th iteration. (In this case, the same permutation $\alpha$ can also be used in the $t$-th iteration such that the corresponding Jacobian matrix$\frac{\partial Z_{\alpha}}{\partial \hat{Z}}$  has nonzero diagonal entries.) Therefore, by the induction hypothesis, Statements (a), (b), and (c) of \cref{proposition:induction_anm_dag} hold for $(\hat{g},p_{\hat{Z}},\mathcal{G}_{\hat{Z}})$ for the case of $t$, and we aim to further show that these statements also hold for $(\hat{g},p_{\hat{Z}},\mathcal{G}_{\hat{Z}})$ for the case of $t+1$.

By Statement (c) of \cref{proposition:induction_anm_dag} for the case of $t$, the transformation from $\hat{Z}_{[n-t]}$ to $Z_{\alpha[n-t]}$, denoted by $v$, is a diffeomorphism. By the change-of-variable formula, we have
\begin{flalign}
p^{(u)}(\hat{Z}_{[n-t]})|\det J_{v^{-1}}| &= p^{(u)}(Z_{\alpha[n-t]})\nonumber \nonumber\\
\log p^{(u)}(\hat{Z}_{[n-t]}) &= \log p^{(u)}(Z_{\alpha[n-t]}) + \log|\det J_{v}|,\label{eq:change_of_variable}
\end{flalign}
where $J_{v}$ denotes the Jacobian matrix of $v$. Suppose $\hat{Z}_{l}$ is a sink node in DAG $\mathcal{G}_{\hat{Z}_{[n-t]}}$. By \cref{lemma:second_order_derivative}, the second-order derivative w.r.t $\hat{Z}_{l}$ is given by
\begin{flalign*}
\hspace{-0.5em}\frac{\partial^2 \log p^{(u)}(\hat{Z})}{\partial \hat{Z}_{l}^2}=& \sum_{i=t}^n \frac{\partial^2 \log p^{(u)}(Z_{\alpha[n-t]})}{\partial Z_{\alpha(i)}^2} \left(\frac{\partial  Z_{\alpha(i)}}{\partial \hat{Z}_{l}}\right)^2 +  \sum_{j=t}^n \sum_{\substack{i:\{Z_{\alpha(i)},Z_{\alpha(j)}\}\in \mathcal{E}(\mathcal{M}_{Z_{\alpha[n-t]}})}}\frac{\partial^2 \log p^{(u)}(Z_{\alpha[n-t]})}{\partial Z_{\alpha(i)} \partial Z_{\alpha(j)}} \frac{\partial Z_{\alpha(j)}}{\partial \hat{Z}_{l}} \frac{\partial Z_{\alpha(i)}}{\partial \hat{Z}_{l}} \\
&=\sum_{i=t}^n \frac{\partial \log p^{(u)}(Z_{\alpha[n-t]})}{\partial  Z_{\alpha(i)}}\frac{\partial^2 Z_{\alpha(i)}}{\partial \hat{Z}_{l}^2} + \frac{\partial^2\log|\det J_{v}|}{\partial \hat{Z}_l^2}.
\end{flalign*}
Further taking third-order derivative w.r.t the sink node $\hat{Z}_l$ and applying \cref{lemma:zero_derivative_anm}, we have
\begin{equation}\label{eq:third_order_derivative}
\begin{aligned}
0 = &\sum_{i:Z_{\alpha(i)}\in Z_{\alpha[n-t]}\setminus \mathcal{S}(\mathcal{G}_{Z_{\alpha[n-t]}})} \frac{\partial^3 \log p^{(u)}(Z_{\alpha[n-t]})}{\partial Z_{\alpha(i)}^3} \left(\frac{\partial  Z_{\alpha(i)}}{\partial \hat{Z}_{l}}\right)^3 \\
&\qquad+\sum_{\substack{i,j: \\\{Z_{\alpha(i)},Z_{\alpha(j)}\}\in \mathcal{E}(\mathcal{M}_{Z_{\alpha[n-t]}}),\\ Z_{\alpha(i)}\not\in\mathcal{S}(\mathcal{G}_{Z_{\alpha[n-t]}})}} \frac{\partial^3 \log p^{(u)}(Z_{\alpha[n-t]})}{\partial Z_{\alpha(i)}^2 \partial Z_{\alpha(j)}} \left(2\cdot\left(\frac{\partial  Z_{\alpha(i)}}{\partial \hat{Z}_{l}}\right)^2 \frac{\partial Z_{\alpha(j)}}{\partial \hat{Z}_{l}}\right)\\
&\qquad+\sum_{\substack{i,j,k:\\i<j<k,\\\{Z_{\alpha(i)},Z_{\alpha(j)}\},\{Z_{\alpha(j)},Z_{\alpha(k)}\},\{Z_{\alpha(i)},Z_{\alpha(k)}\}\in\mathcal{E}(\mathcal{M}_{Z_{\alpha[n-t]}})}} \frac{\partial^3 \log p^{(u)}(Z_{\alpha[n-t]})}{\partial Z_{\alpha(i)} \partial Z_{\alpha(j)}\partial Z_{\alpha(k)}} \left(6\cdot\frac{\partial  Z_{\alpha(i)}}{\partial \hat{Z}_{l}} \frac{\partial Z_{\alpha(j)}}{\partial \hat{Z}_{l}}\frac{\partial  Z_{\alpha(k)}}{\partial \hat{Z}_{l}}\right) \\
& \qquad+ \sum_{i=1}^d \frac{\partial^2 \log p^{(u)}(Z_{\alpha[n-t]})}{\partial Z_{\alpha(i)}^2} \left(3\cdot\frac{\partial^2  Z_{\alpha(i)}}{\partial \hat{Z}_{l}^2} \frac{\partial Z_{\alpha(i)}}{\partial \hat{Z}_{l}}\right)\\
& \qquad+ \sum_{\substack{i,j: \\i < j,\\ \{Z_{\alpha(i)},Z_{\alpha(j)}\}\in \mathcal{E}(\mathcal{M}_{Z_{\alpha[n-t]}})}} \frac{\partial^2 \log p^{(u)}(Z_{\alpha[n-t]})}{\partial Z_{\alpha(i)} \partial Z_{\alpha(j)}} \left(3\cdot\frac{\partial^2  Z_{\alpha(j)}}{\partial \hat{Z}_{l}^2} \frac{\partial Z_{\alpha(i)}}{\partial \hat{Z}_{l}}+3\cdot\frac{\partial  Z_{\alpha(j)}}{\partial \hat{Z}_{l}} \frac{\partial^2 Z_{\alpha(i)}}{\partial \hat{Z}_{l}^2}\right)\\
&\qquad + 
\sum_{i=1}^n \frac{\partial \log p^{(u)}(Z_{\alpha[n-t]})}{\partial  Z_{\alpha(i)}}\frac{\partial^3 Z_{\alpha(i)}}{\partial \hat{Z}_{l}^3} + \frac{\partial^3\log|\det J_{v}|}{\partial \hat{Z}_{l}^3}.
\end{aligned}
\end{equation}
By \cref{assumption:sufficient_changes_intimate_neighbors_anm}, there exist multiple values of $u_j$ such that the above equation holds. Subtracting each equation corresponding to $u_j,j\neq 0$ with the equation corresponding to $u_0$, and using the assumption that the vectors formed by collecting the resulting coefficients (involving differences of partial derivatives) are linearly independent, we obtain
\begin{equation}\label{eq:proof_third_derivative_zero}
\left(\frac{\partial  Z_{\alpha(i)}}{\partial \hat{Z}_{l}}\right)^3=0 \quad\iff \quad\frac{\partial  Z_{\alpha(i)}}{\partial \hat{Z}_{l}}=0\quad \text{for} \quad Z_{\alpha(i)}\in Z_{\alpha[n-t]}\setminus \mathcal{S}(\mathcal{G}_{Z_{\alpha[n-t]}}).
\end{equation}
Therefore, the non-sink nodes in $\mathcal{G}_{Z_{\alpha[n-t]}}$ cannot be a function of $\hat{Z}_{l}$ which is a sink node in in DAG $\mathcal{G}_{\hat{Z}_{[n-t]}}$. This implies that $Z_{\alpha(l)}$ is a sink node in DAG $\mathcal{G}_{Z_{\alpha[n-t]}}$, because otherwise $Z_{\alpha(l)}$ is not a function of $\hat{Z}_{l}$, which contradicts the induction hypothesis that the Jacobian matrix $\frac{\partial Z_{\alpha[n-t]}}{\partial \hat{Z}_{[n-t]}}$ has nonzero diagonal entries. 

Recall that in Line 2 of \cref{alg:crl}, the estimated Markov network $\mathcal{M}_{\hat{Z}_{[n-t]}}$ over $\hat{Z}_{[n-t]}$ has minimal number of edges. Thus, by \cref{eq:change_of_variable}, \cref{assumption:sufficient_changes_markov_network}, and the induction hypothesis that the Jacobian matrix $\frac{\partial Z_{\alpha[n-t]}}{\partial \hat{Z}_{[n-t]}}$ has nonzero diagonal entries, \citet[Theorem~2]{zhang2024causal} implies that the Markov newtorks over $\hat{Z}_{[n-t]}$ and $Z_{\alpha[n-t]}$ are identical, i.e., 
\begin{equation}\label{eq:identical_markov_network}
\{\hat{Z}_{i},\hat{Z}_{j}\}\in\mathcal{E}(\mathcal{M}_{\hat{Z}_{[n-t]}}) \quad\iff\quad \{Z_{\alpha(i)}, Z_{\alpha(j)}\}\in\mathcal{E}(\mathcal{M}_{Z_{\alpha[n-t]}}),\quad i,j\in [n-t], i\neq j.
\end{equation}
The faithfulness assumption implies that the undirected graph defined by Markov network $\mathcal{M}_{Z_{\alpha[n-t]}}$ is the moral graph of DAG $\mathcal{G}_{Z_{\alpha[n-t]}}$ (e.g., see \citet[Proposition~2]{zhang2024causal}). Since $Z_{\alpha(l)}$ is a sink node in DAG $\mathcal{G}_{Z_{\alpha[n-t]}}$, each undirected edge $\{Z_{\alpha(i)}, Z_{\alpha(l)}\}\in\mathcal{E}(\mathcal{M}_{Z_{\alpha[n-t]}})$ in the moral graph implies a directed edge $Z_{\alpha(i)}\rightarrow Z_{\alpha(l)}$ in DAG $\mathcal{G}_Z$. Similar reasoning can be used to show that each undirected edge $\{\hat{Z}_{i},\hat{Z}_{l}\}\in\mathcal{E}(\mathcal{M}_{\hat{Z}_{[n-t]}})$ in the moral graph implies a directed edge $\hat{Z}_{i}\rightarrow\hat{Z}_{l}$ in DAG $\mathcal{G}_{\hat{Z}}$. Therefore, we have
\[
\hat{Z}_{i}\rightarrow\hat{Z}_{l} \text{~in~} \mathcal{G}_{\hat{Z}} \quad\iff\quad Z_{\alpha(i)}\rightarrow Z_{\alpha(l)} \text{~in~} \mathcal{G}_Z,\quad i\in [n].
\]
In Line 3 of \cref{alg:crl}, after reordering $\hat{Z}$ in $(\hat{g},p_{\hat{Z}},\mathcal{G}_{\hat{Z}})$, we know that $\hat{Z}_{n-t}$ is a sink node in DAG $\mathcal{G}_{\hat{Z}_{[n-t]}}$. By the above relation, we then have
\begin{equation}\label{eq:induction_new_edges}
\hat{Z}_{i}\rightarrow\hat{Z}_{n-t} \text{~in~} \mathcal{G}_{\hat{Z}} \quad\iff\quad Z_{\alpha(i)}\rightarrow Z_{\alpha(n-t)} \text{~in~} \mathcal{G}_Z,\quad i\in [n].
\end{equation}

Recall that the induction hypothesis (i.e., Statement (a) of \cref{proposition:induction_anm_dag} for the case of $t$) indicates
\begin{equation}\label{eq:induction_existing_edges}
\hat{Z}_{i}\rightarrow\hat{Z}_{j} \text{~in~} \mathcal{G}_{\hat{Z}} \quad\iff\quad Z_{\alpha(i)}\rightarrow Z_{\alpha(j)} \text{~in~} \mathcal{G}_Z,\quad i\in [n],j=n+1-t,\dots,n.
\end{equation}
By \cref{eq:induction_new_edges,eq:induction_existing_edges}, we have shown that Statement (a) of \cref{proposition:induction_anm_dag} holds for the case of $t + 1$.

Now we prove Statements (b) and (c) of \cref{proposition:induction_anm_dag} for the case of $t + 1$. Recall that in Line 2 of \cref{alg:crl}, the estimated Markov network $\mathcal{M}_{\hat{Z}_{[n-t]}}$ over $\hat{Z}_{[n-t]}$ has minimal number of edges. Thus, by \cref{eq:change_of_variable}, \cref{assumption:sufficient_changes_markov_network}, and the induction hypothesis that the Jacobian matrix $\frac{\partial Z_{\alpha[n-t]}}{\partial \hat{Z}_{[n-t]}}$ has nonzero diagonal entries, \citet[Proposition~4]{zhang2024causal} and its proof imply the following statements:
\begin{itemize}
\item Statement(i): Each latent variable $Z_{\alpha(i)},i\in [n-t]$ is solely a function of a subset of $\hat{Z}_{i}\cup \Psi(\hat{Z}_i;\mathcal{M}_{\hat{Z}_{[n-t]}})$.\item Statement(ii):  Each latent variable $\hat{Z}_{i},i\in [n-t]$ is solely a function of a subset of $Z_{\alpha(i)}\cup \Psi(Z_{\alpha(i)};\mathcal{M}_{Z_{\alpha[n-t]}})$. 
\end{itemize}

By the induction hypothesis (i.e., Statement (b) of \cref{proposition:induction_anm_dag} for the case of $t$), each latent variable $\hat{Z}_{i},i\in[n]$ is solely a function of a subset of $Z_{\alpha(i)}\cup \bigcap_{k=\max(n+1-t,i)}^{n}\Psi(Z_{\alpha(i)};\mathcal{M}_{Z_{\alpha[k]}})$. By Statement (ii), one can straightforwardly show that each latent variable $\hat{Z}_{i},i\in[n]$ is solely a function of a subset of $Z_{\alpha(i)}\cup \bigcap_{k=\max(n-t,i)}^{n}\Psi(Z_{\alpha(i)};\mathcal{M}_{Z_{\alpha[k]}})$, indicating that Statement (b) of \cref{proposition:induction_anm_dag} hold for the case of $t + 1$.

In Line 3 of \cref{alg:crl}, after reordering $\hat{Z}$ in $(\hat{g},p_{\hat{Z}},\mathcal{G}_{\hat{Z}})$, we know that $\hat{Z}_{n-t}$ is a sink node in DAG $\mathcal{G}_{\hat{Z}_{[n-t]}}$. First, by \cref{eq:proof_third_derivative_zero}, the non-sink nodes in $\mathcal{G}_{Z_{\alpha[n-t]}}$ cannot be a function of $\hat{Z}_{n-t}$. This also indicates that $Z_{\alpha(n-t)}$ is a sink node in $\mathcal{G}_{Z_{\alpha[n-t]}}$, due to the induction hypothesis that the Jacobian matrix $\frac{\partial Z_{\alpha[n-t]}}{\partial \hat{Z}_{[n-t]}}$ has nonzero diagonal entries. Second, suppose that $Z_{\alpha(k)}$ is a sink node in $\mathcal{G}_{Z_{\alpha[n-t]}}$, and that it is distinct from $Z_{\alpha(n-t)}$. Since both $Z_{\alpha(n-t)}$ and $Z_{\alpha(k)}$ are sink nodes in DAG $\mathcal{G}_{Z_{\alpha[n-t]}}$, clearly they cannot be adjacent to each other or share a common child in DAG $\mathcal{G}_{Z_{\alpha[n-t]}}$. By \cref{lemma:moral_graph_subgraph}, $Z_{\alpha(n-t)}$ and $Z_{\alpha(k)}$ cannot be adjacent in Markov network $\mathcal{M}_{Z_{\alpha[n-t]}}$. By \cref{eq:identical_markov_network}, $\hat{Z}_{n-t}$ and $\hat{Z}_{k}$ cannot be adjacent in Markov network $\mathcal{M}_{\hat{Z}_{[n-t]}}$. This implies $\hat{Z}_{n-t}\not\in\Psi(\hat{Z}_k;\mathcal{M}_{\hat{Z}_{[n-t]}})$, which, by Statement (i) derived above, indicates that $Z_{\alpha(k)}$ cannot be a function of $\hat{Z}_{n-t}$. Combining both cases (for sink nodes and non-sink nodes), we conclude that all variables in $Z_{\alpha[n-t]}$, except $Z_{\alpha(n-t)}$, cannot be a function of $\hat{Z}_{n-t}$. Therefore, by the induction hypothesis about diffeomorphism from $\hat{Z}_{[n-t]}$ to $Z_{\alpha[n-t]}$, applying \cref{lemma:sub_diffeomorphism} implies that the transformation from $\hat{Z}_{[n-1-t]}$ to $Z_{\alpha[n-1-t]}$ is a diffeomorphism. Also, clearly the Jacobian matrix $\frac{\partial Z_{\alpha[n-1-t]}}{\partial \hat{Z}_{[n-1-t]}}$ has nonzero diagonal entries. That is, we have shown that Statement (c) of \cref{proposition:induction_anm_dag} holds for the case of $t + 1$.

Up until now, we have shown that the inductive step holds for the proof of \cref{proposition:induction_anm_dag}. The same technique applies to the base case of $t=1$, which is omitted here. Specifically, for the base case, we rely on the assumption that both the true mixing function $g$ and the estimated mixing function $\hat{g}$ are diffeomorphisms onto their images, indicating that there the transformation from $\hat{Z}$ to $Z$ is a diffeomorphism.
\end{proof}

\vspace{0.1em}
\subsection{Proof of \cref{proposition:intimate_parents}}
The following result is used in the proof of \cref{theorem:identifiability_anm}, specifically for the identifiability of latent variables $Z$. It relates the intimate neighbors in different Markov networks to the surrounding parents in the latent DAG.

\begin{proposition}\label{proposition:intimate_parents}
Consider the data generating process in \cref{eq:data_generating_process}. Let $\alpha$ be a causal order of variables $Z=(Z_1,\dots,Z_n)$ with respect to DAG $\mathcal{G}_{Z}$ and denote $Z_{\alpha[k]}\coloneqq(Z_{\alpha(l)})_{l=1}^{k}$. Under the faithfulness assumption, we have
\[ \bigcap_{k=i}^{n}\Psi(Z_{\alpha(i)};\mathcal{M}_{Z_{\alpha[k]}})\subseteq  \operatorname{sur}(Z_{\alpha(i)};\mathcal{G}_{Z}) \quad\text{for}\quad i\in[n-1].
\]
\end{proposition}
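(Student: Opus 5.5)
We prove the inclusion elementwise: take $Z_j$ in the left-hand intersection and show that $Z_j$ is a parent of $Z_{\alpha(i)}$ and a parent of every child of $Z_{\alpha(i)}$. Two facts drive the argument. First, since $\alpha$ is a causal order, each prefix $Z_{\alpha[k]}$ is an ancestral set. Hence the marginal $p(Z_{\alpha[k]})$ factorizes according to the induced subgraph $\mathcal{G}_{Z_{\alpha[k]}}$, and it is faithful to that subgraph because d-separation within an ancestral set coincides with d-separation in $\mathcal{G}_Z$. Second, combining this with \cref{lemma:moral_graph_subgraph} and the faithfulness argument already used in the proof of \cref{proposition:induction_anm_dag} (cf.\ \citet[Proposition~2]{zhang2024causal}), the Markov network $\mathcal{M}_{Z_{\alpha[k]}}$ equals the moral graph of $\mathcal{G}_{Z_{\alpha[k]}}$ for every $k$.

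\textbf{Parenthood.} Use the term $k=i$ of the intersection. In $\mathcal{G}_{Z_{\alpha[i]}}$ the node $Z_{\alpha(i)}$ is last in the causal order, so it is a sink. In the moral graph of this subgraph, its neighbors are therefore exactly its parents: a sink has no children and hence no spouses. Since $Z_j\in\Psi(Z_{\alpha(i)};\mathcal{M}_{Z_{\alpha[i]}})$, $Z_j$ is adjacent to $Z_{\alpha(i)}$ there. Thus $Z_j\in\textrm{PA}(Z_{\alpha(i)};\mathcal{G}_Z)$, and in particular $Z_j=Z_{\alpha(j')}$ for some $j'<i$.

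\textbf{Parent of every child.} Let $C=Z_{\alpha(c)}$ be a child of $Z_{\alpha(i)}$, so $c>i$. Use the term $k=c$: in $\mathcal{G}_{Z_{\alpha[c]}}$, $C$ is a sink, adjacent to $Z_{\alpha(i)}$ in the moral graph, and hence a neighbor of $Z_{\alpha(i)}$ in $\mathcal{M}_{Z_{\alpha[c]}}$. The intimate-neighbor condition for $Z_j\in\Psi(Z_{\alpha(i)};\mathcal{M}_{Z_{\alpha[c]}})$ says $Z_j$ is adjacent to all other neighbors of $Z_{\alpha(i)}$, so $Z_j$ is adjacent to $C$ in the moral graph of $\mathcal{G}_{Z_{\alpha[c]}}$. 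Because $C$ is a sink of that subgraph, its moral neighbors are exactly its parents. Hence $Z_j\rightarrow C$ in $\mathcal{G}_Z$.

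Together these show $Z_j\in\operatorname{sur}(Z_{\alpha(i)};\mathcal{G}_Z)$. If $Z_{\alpha(i)}$ has no children the second step is vacuous. The restriction $i\le n-1$ plays no role beyond keeping the intersection nonempty.

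The main obstacle is justifying that $\mathcal{M}_{Z_{\alpha[k]}}$ equals the moral graph of the induced subgraph on each prefix. This requires faithfulness of the marginal to $\mathcal{G}_{Z_{\alpha[k]}}$, which I would obtain from ancestral closure: marginalizing sinks preserves both the Markov factorization and d-separation statements among the remaining variables. Once this holds, the rest is the sink-neighborhood observation applied at two well-chosen indices $k=i$ and $k=c$.
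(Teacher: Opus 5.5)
Your proof is correct and takes essentially the same route as the paper. Your second step, using $k=c$ where the child $C$ is a sink of $\mathcal{G}_{Z_{\alpha[c]}}$ and a Markov-network neighbor of $Z_{\alpha(i)}$ by faithfulness, is exactly the paper's Case~2; the paper argues by contrapositive and splits ``not a parent'' into child/spouse/other subcases using several values of $k$, which your single use of $k=i$ handles uniformly, and your argument also covers $i=n$ and makes explicit the passage of faithfulness to ancestral prefixes that the paper leaves implicit.
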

\begin{proof}
It suffices to show that $Z_{\alpha(j)}\in\bigcap_{k=i}^{n}\Psi(Z_{\alpha(i)};\mathcal{M}_{Z_{\alpha[k]}})$ implies $Z_{\alpha(j)}\in \operatorname{sur}(Z_{\alpha(i)};\mathcal{G}_{Z})$ for $i\in[n-1]$. We provide a proof by contrapositive, i.e., we aim to show that $Z_{\alpha(j)}\not\in \operatorname{sur}(Z_{\alpha(i)};\mathcal{G}_{Z})$ implies
\begin{equation}\label{eq:proof_not_in_intersection}
Z_{\alpha(j)}\not\in\bigcap_{k=i}^{n}\Psi(Z_{\alpha(i)};\mathcal{M}_{Z_{\alpha[k]}}).
\end{equation}
Now suppose $Z_{\alpha(j)}\not\in \operatorname{sur}(Z_{\alpha(i)};\mathcal{G}_{Z})$. By the definition of surrounding parents, we have the following cases.

\textbf{Case 1:} $Z_{\alpha(j)}$ is not a parent of $Z_{\alpha(i)}$. We need to consider the following cases.
\begin{itemize}
\item \textbf{Case 1(a):} $Z_{\alpha(j)}$ is a child of $Z_{\alpha(i)}$. This means $i<j$. For $i\leq k<j$, we have $Z_{\alpha(j)}\not\in Z_{\alpha[k]}$ and thus $Z_{\alpha(j)}\not\in \Psi(Z_{\alpha(i)};\mathcal{M}_{Z_{\alpha[k]}})$, which implies \cref{eq:proof_not_in_intersection}.
\item \textbf{Case 1(b):} $Z_{\alpha(j)}$ is a spouse of $Z_{\alpha(i)}$. By definition of spouse, $Z_{\alpha(j)}$ is not adjacent to $Z_{\alpha(i)}$ in $\mathcal{G}_Z$. If $i<j$, then for $i\leq k<j$, we have $Z_{\alpha(j)}\not\in Z_{\alpha[k]}$ and thus $Z_{\alpha(j)}\not\in \Psi(Z_{\alpha(i)};\mathcal{M}_{Z_{\alpha[k]}})$, which implies \cref{eq:proof_not_in_intersection}. Otherwise, we have $i>j$. In this case, consider the latent DAG $\mathcal{G}_{Z_{\alpha[i]}}$, where $Z_{\alpha(i)}$ is a sink node. Clearly, $Z_{\alpha(j)}$ is not a parent, child, or spouse of $Z_{\alpha(i)}$ in DAG $\mathcal{G}_{Z_{\alpha[i]}}$. Therefore, $Z_{\alpha(j)}$ and $Z_{\alpha(i)}$ are not adjacent in the moral graph of $\mathcal{G}_{Z_{\alpha[i]}}$, which, by \cref{lemma:moral_graph_subgraph}, indicates that they are not adjacent in the Markov network $\mathcal{M}_{Z_{\alpha[i]}}$. This implies $Z_{\alpha(j)}\not\in\Psi(Z_{\alpha(i)};\mathcal{M}_{Z_{\alpha[i]}})$ and thus \cref{eq:proof_not_in_intersection}.
\item \textbf{Case 1(c):} $Z_{\alpha(j)}$ is not a child or spouse of $Z_{\alpha(i)}$. In this case, $Z_{\alpha(j)}$ is not a parent, child, or spouse of $Z_{\alpha(i)}$. Therefore, $Z_{\alpha(j)}$ and $Z_{\alpha(i)}$ are not adjacent in the moral graph of $\mathcal{G}_Z$, which, by \cref{lemma:moral_graph_subgraph}, indicates that they are not adjacent in the Markov network $\mathcal{M}_Z$. This implies $Z_{\alpha(j)}\not\in\Psi(Z_{\alpha(i)};\mathcal{M}_{Z})$ and thus \cref{eq:proof_not_in_intersection}.
\end{itemize}

\textbf{Case 2:} $Z_{\alpha(j)}$ is not a parent of some child of $Z_{\alpha(i)}$, denoted as $Z_{\alpha(l)}$. This means $l>i$. In this case, consider the latent DAG $\mathcal{G}_{Z_{\alpha[l]}}$, where $Z_{\alpha(l)}$ is a sink node. Clearly, $Z_{\alpha(j)}$ is not a parent, child, or spouse of $Z_{\alpha(l)}$ in DAG $\mathcal{G}_{Z_{\alpha[l]}}$. Therefore, $Z_{\alpha(j)}$ and $Z_{\alpha(l)}$ are not adjacent in the moral graph of $\mathcal{G}_{Z_{\alpha[l]}}$. By \cref{lemma:moral_graph_subgraph}, this indicates that $Z_{\alpha(j)}$ is not adjacent to $Z_{\alpha(l)}$ in the Markov network $\mathcal{M}_{Z_{\alpha[l]}}$, which, under the faithfulness assumption, is a neighbor of $Z_{\alpha(i)}$ in the Markov network $\mathcal{M}_{Z_{\alpha[l]}}$. This implies $Z_{\alpha(j)}\not\in\Psi(Z_{\alpha(i)};\mathcal{M}_{Z_{\alpha[l]}})$ and thus \cref{eq:proof_not_in_intersection}.
\end{proof}

\section{Proof of \cref{theorem:identifiability_hnm}}\label{app:theorem_identifiability_hnm}
\ThmIdentifiabilityHNM*
\begin{proof}
The proof here is identical to that of \cref{theorem:identifiability_anm}, which leverages \cref{proposition:induction_anm_dag,proposition:intimate_parents}. Therefore, we omit the proof here. The only (minor) difference is that the third-order derivative in \cref{eq:third_order_derivative} is instead given by:
\begin{flalign*}
0 = &\sum_{i:Z_{\alpha(i)}\in Z_{\alpha[n-t]}\setminus \mathcal{S}(\mathcal{G}_{Z_{\alpha[n-t]}})} \frac{\partial^3 \log p^{(u)}(Z_{\alpha[n-t]})}{\partial Z_{\alpha(i)}^3} \left(\frac{\partial  Z_{\alpha(i)}}{\partial \hat{Z}_{l}}\right)^3 \\
&\qquad+\sum_{\substack{i,j: \\\{Z_{\alpha(i)},Z_{\alpha(j)}\}\in \mathcal{E}(\mathcal{M}_{Z_{\alpha[n-t]}})}} \frac{\partial^3 \log p^{(u)}(Z_{\alpha[n-t]})}{\partial Z_{\alpha(i)}^2 \partial Z_{\alpha(j)}} \left(2\cdot\left(\frac{\partial  Z_{\alpha(i)}}{\partial \hat{Z}_{l}}\right)^2 \frac{\partial Z_{\alpha(j)}}{\partial \hat{Z}_{l}}\right)\\
&\qquad+\sum_{\substack{i,j,k:\\i<j<k,\\\{Z_{\alpha(i)},Z_{\alpha(j)}\},\{Z_{\alpha(j)},Z_{\alpha(k)}\},\{Z_{\alpha(i)},Z_{\alpha(k)}\}\in\mathcal{E}(\mathcal{M}_{Z_{\alpha[n-t]}})}} \frac{\partial^3 \log p^{(u)}(Z_{\alpha[n-t]})}{\partial Z_{\alpha(i)} \partial Z_{\alpha(j)}\partial Z_{\alpha(k)}} \left(6\cdot\frac{\partial  Z_{\alpha(i)}}{\partial \hat{Z}_{l}} \frac{\partial Z_{\alpha(j)}}{\partial \hat{Z}_{l}}\frac{\partial  Z_{\alpha(k)}}{\partial \hat{Z}_{l}}\right) \\
& \qquad+ \sum_{i=1}^d \frac{\partial^2 \log p^{(u)}(Z_{\alpha[n-t]})}{\partial Z_{\alpha(i)}^2} \left(3\cdot\frac{\partial^2  Z_{\alpha(i)}}{\partial \hat{Z}_{l}^2} \frac{\partial Z_{\alpha(i)}}{\partial \hat{Z}_{l}}\right)\\
& \qquad+ \sum_{\substack{i,j: \\i < j,\\ \{Z_{\alpha(i)},Z_{\alpha(j)}\}\in \mathcal{E}(\mathcal{M}_{Z_{\alpha[n-t]}})}} \frac{\partial^2 \log p^{(u)}(Z_{\alpha[n-t]})}{\partial Z_{\alpha(i)} \partial Z_{\alpha(j)}} \left(3\cdot\frac{\partial^2  Z_{\alpha(j)}}{\partial \hat{Z}_{l}^2} \frac{\partial Z_{\alpha(i)}}{\partial \hat{Z}_{l}}+3\cdot\frac{\partial  Z_{\alpha(j)}}{\partial \hat{Z}_{l}} \frac{\partial^2 Z_{\alpha(i)}}{\partial \hat{Z}_{l}^2}\right)\\
&\qquad + 
\sum_{i=1}^n \frac{\partial \log p^{(u)}(Z_{\alpha[n-t]})}{\partial  Z_{\alpha(i)}}\frac{\partial^3 Z_{\alpha(i)}}{\partial \hat{Z}_{l}^3} + \frac{\partial^3\log|\det J_{v}|}{\partial \hat{Z}_{l}^3}.
\end{flalign*}
\end{proof}

\section{Further Empirical Studies}
\subsection{Selecting Number of Latent Variables}\label{app:select_number_latent}
We provide empirical studies to demonstrate how to determine the number of latent variables. Specifically, according to \cref{fig:select_number_latent}, one can in practice perform model selection to select the number of latent variables based on the evidence lower bound (ELBO) loss.

\subsection{Selecting Hyperparameters}\label{app:select_sparsity}
We discuss how to select the hyperparameters for sparsity. Here, we choose $\lambda_1$ based on the ELBO loss and provide an example in \cref{fig:select_sparsity}. If the structure is overly sparse, it fails to reconstruct the input data and the reconstruction error will become high. If the structure is overly dense, it involves many unwanted edges and the KL divergence will become high.

\begin{figure*}[!h]
    \centering
   \begin{tabular}{ccc}
       \includegraphics[width=0.32\linewidth]{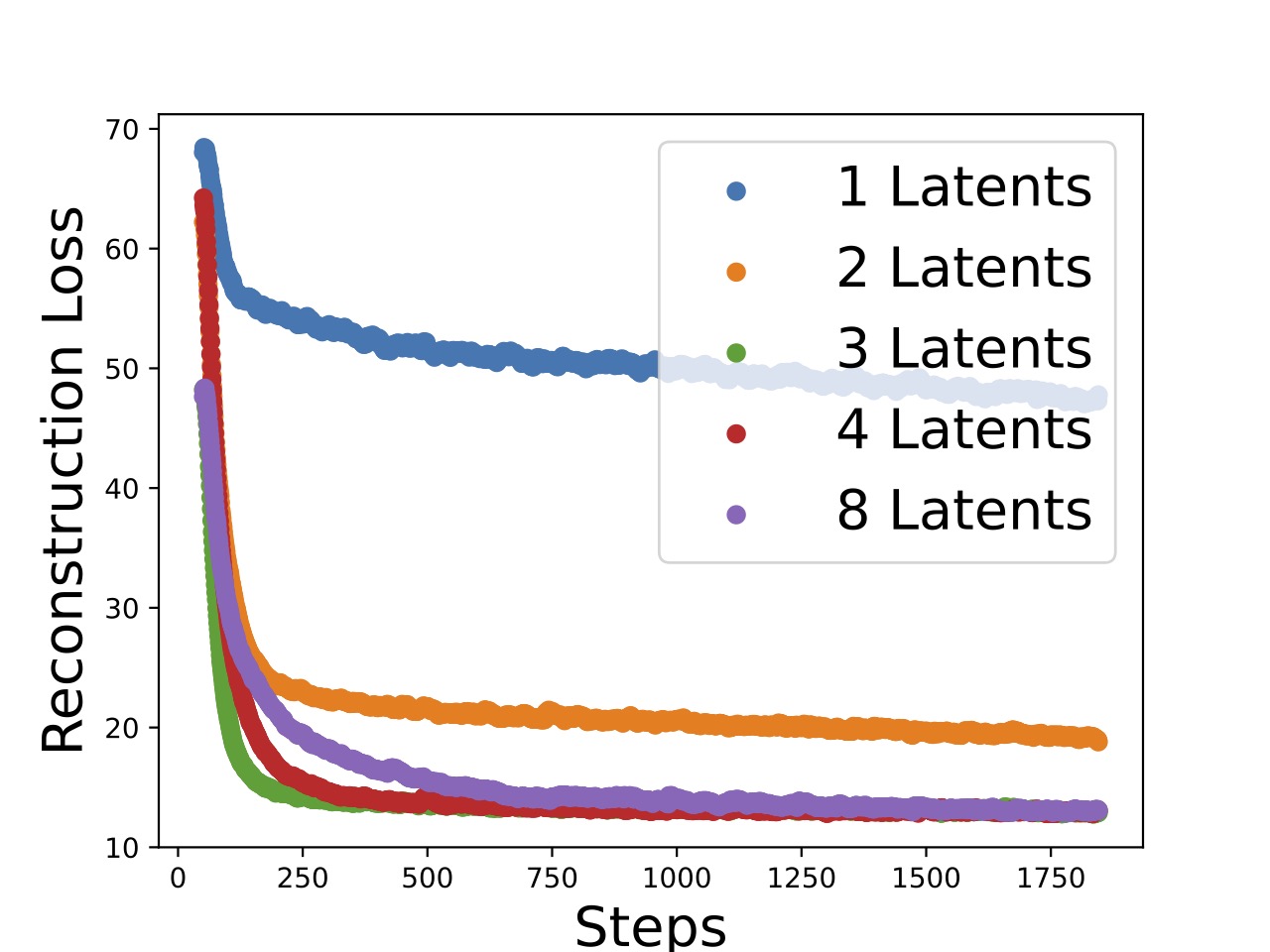} &
       \includegraphics[width=0.32\linewidth]{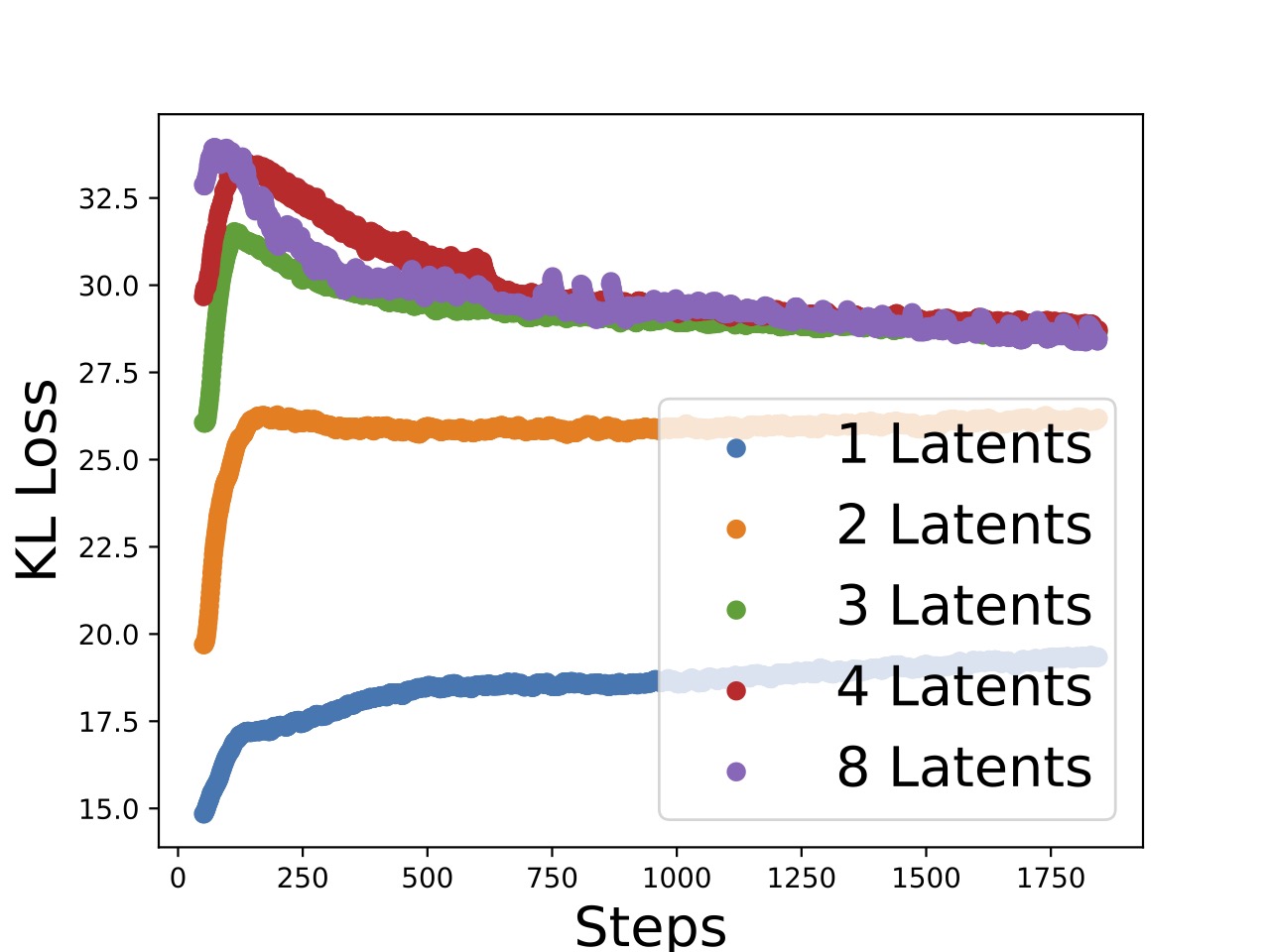} &
       \includegraphics[width=0.32\linewidth]{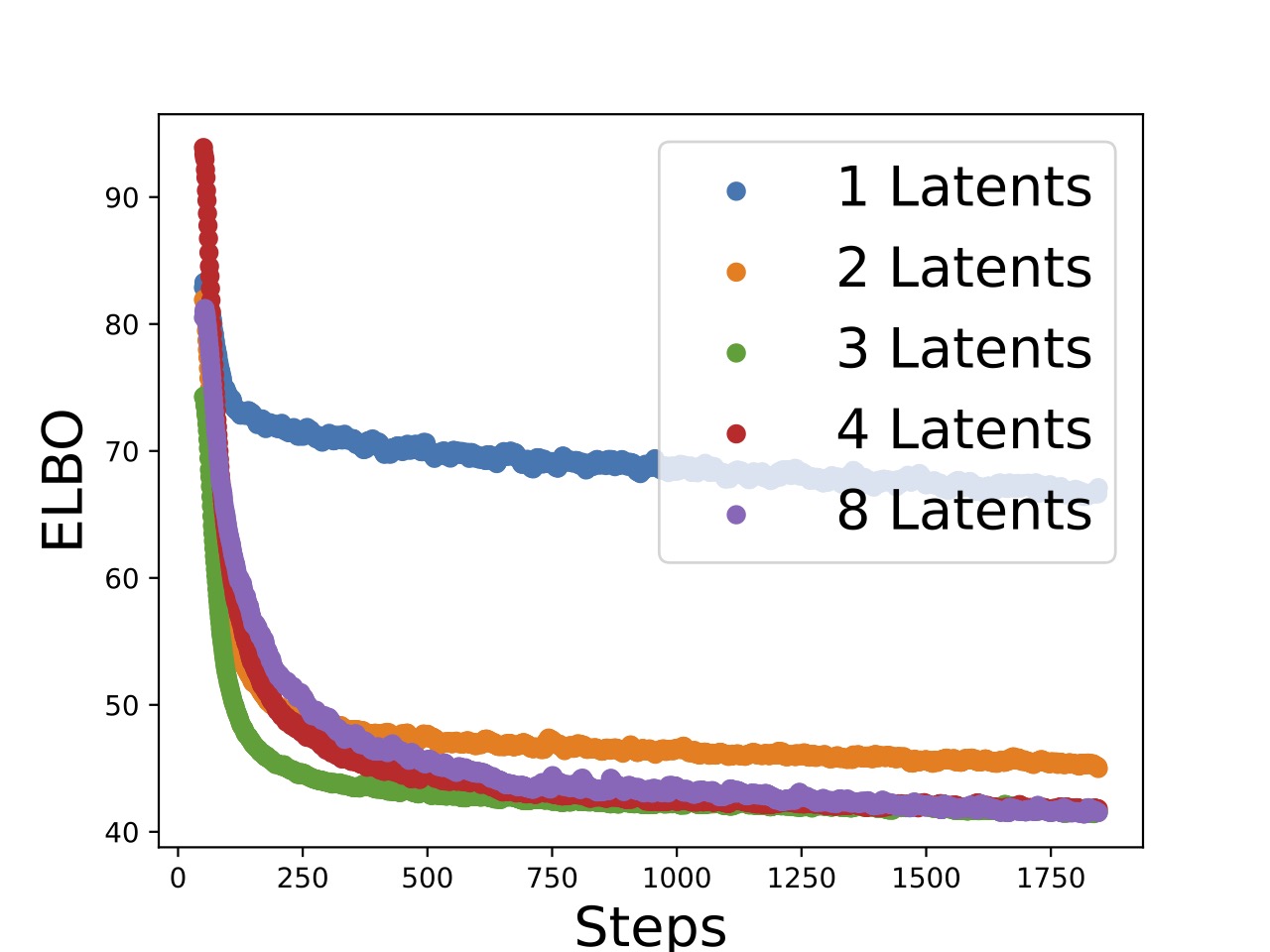}
   \end{tabular}
    \caption{The losses versus the training steps with different number of estimated latent variables. The ground truth number of latent variables is $3$. When we set the estimated number of latent variables to be low, e.g., $1$ or $2$, the reconstruction is poor, showing that they are unable to match the distributions. This suggests that the number of estimated latent variables can be selected based on the ELBO loss.}
    \label{fig:select_number_latent}
\end{figure*}

\begin{figure*}[!h]
    \centering
   \begin{tabular}{ccc}
       \includegraphics[width=0.32\linewidth]{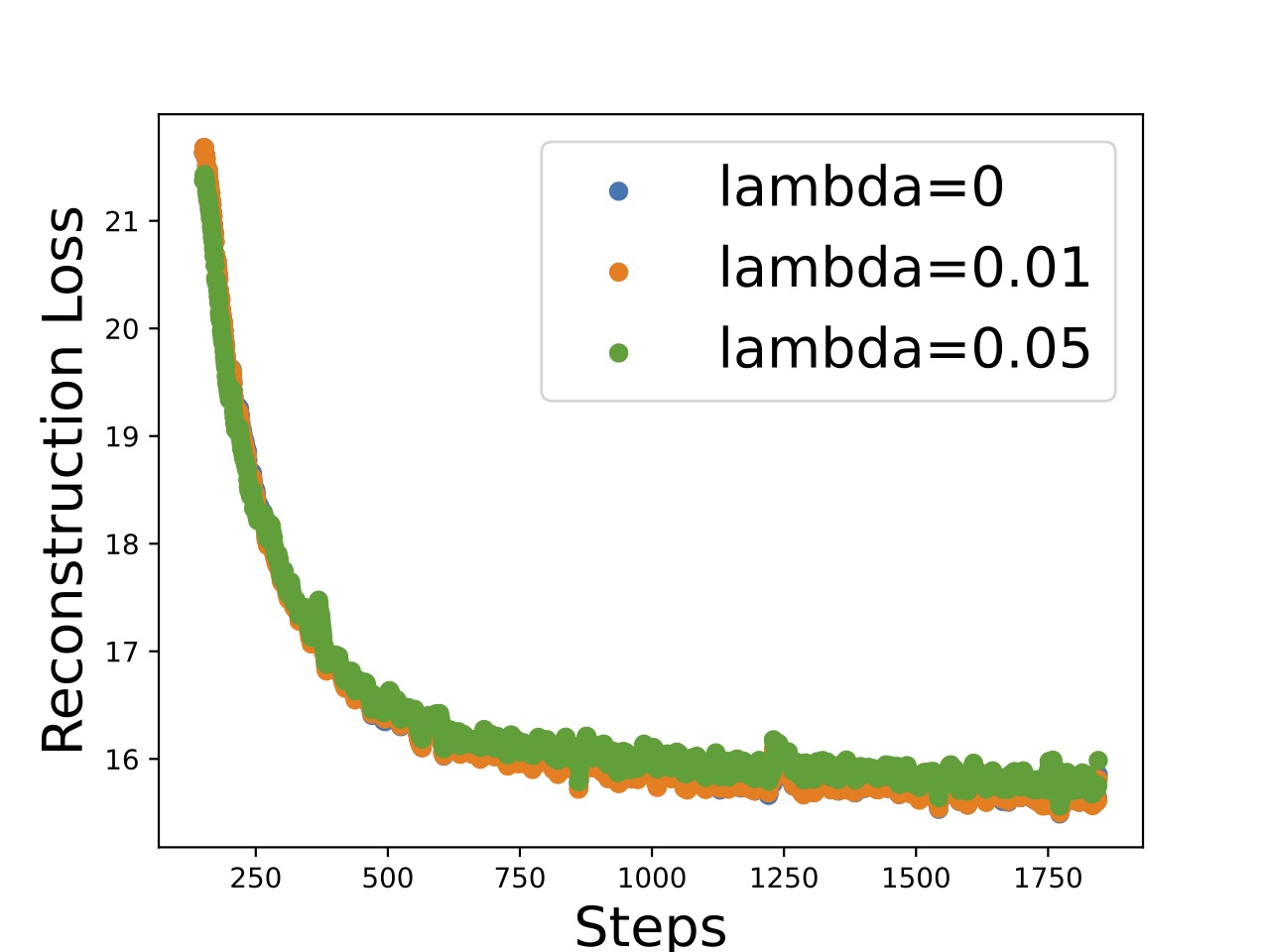} &
       \includegraphics[width=0.32\linewidth]{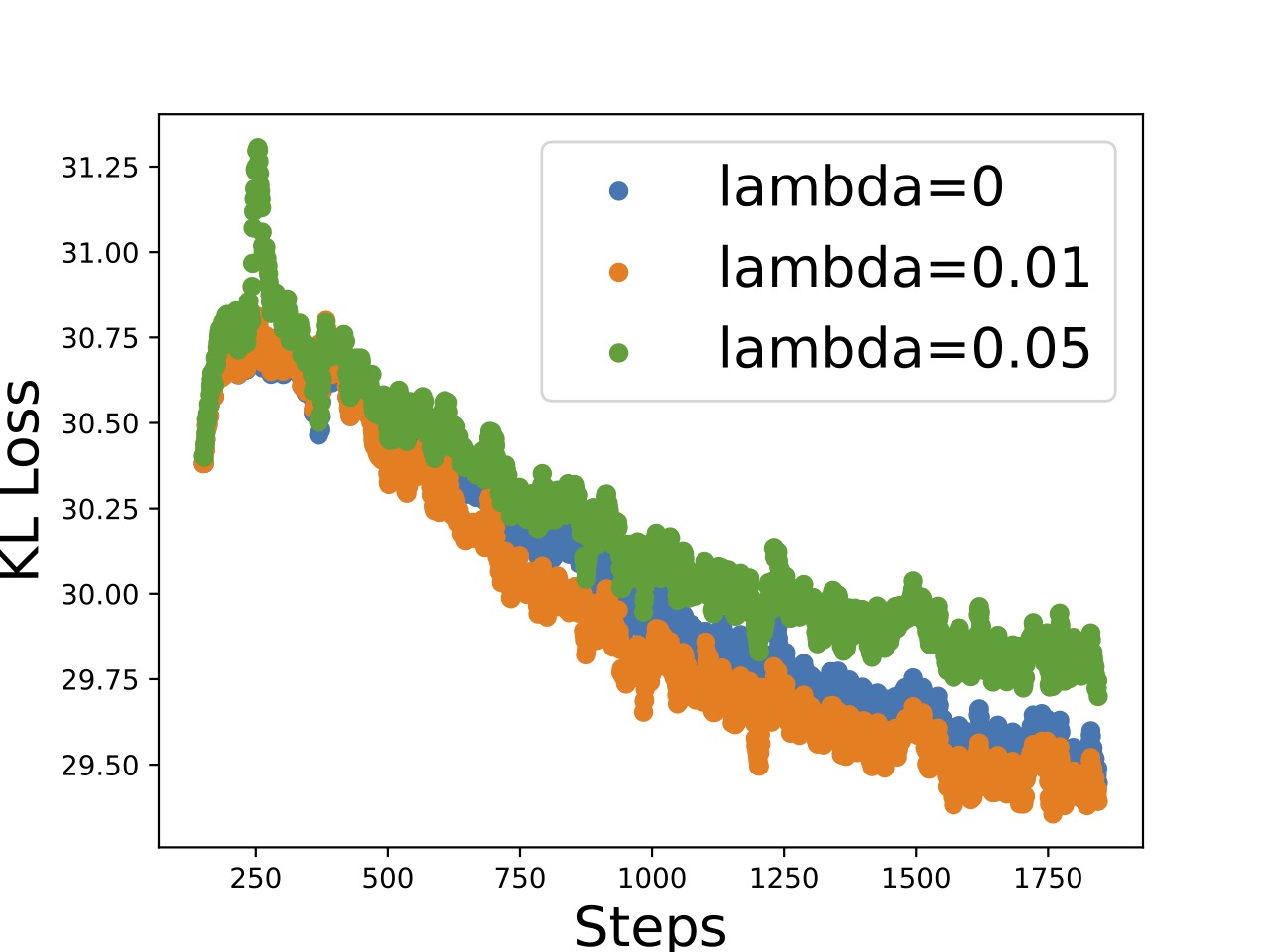} &
       \includegraphics[width=0.32\linewidth]{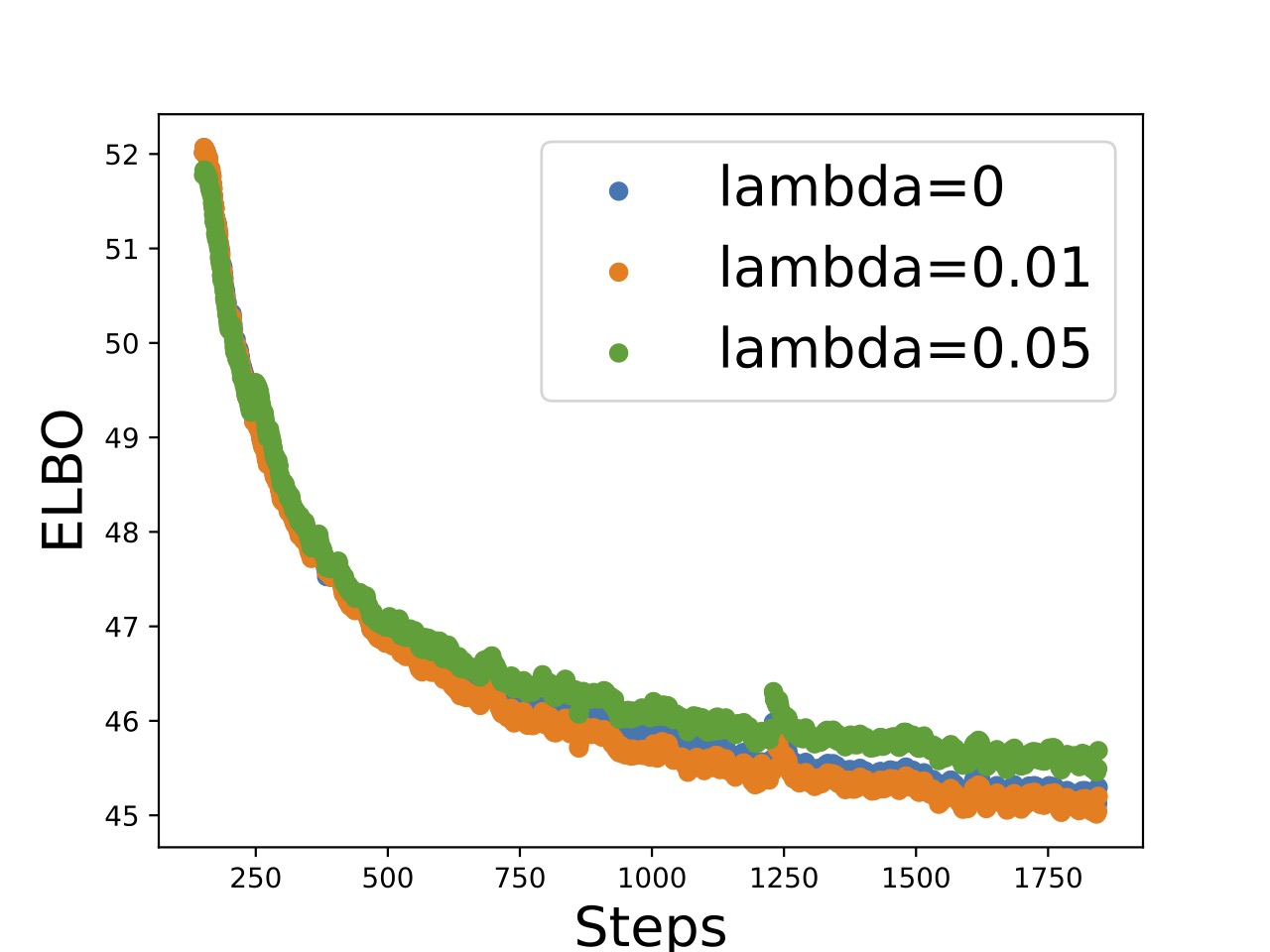}
   \end{tabular}
    \caption{The losses versus the training steps with different values of $\lambda_1$. If we set $\lambda_1=0.05$, which may be too high for the data, the ELBO loss is worst. If we set $\lambda_1=0$, the estimated graph is dense, i.e., estimated variables are unnecessarily dependent. When we set $\lambda_1=0.01$, we achieve an ELBO loss that is as good as the ELBO loss with $\lambda_1=0$, but with fewer edges.
    }
    \label{fig:select_sparsity}
\end{figure*}

\end{document}